% SIAM Article Template
\documentclass[onefignum,onetabnum]{siamonline220329}

% Information that is shared between the article and the supplement
% (title and author information, macros, packages, etc.) goes into
% ex_shared.tex. If there is no supplement, this file can be included
% directly.

% SIAM Shared Information Template
% This is information that is shared between the main document and any
% supplement. If no supplement is required, then this information can
% be included directly in the main document.

% Packages and macros go here
\usepackage{lipsum}
\usepackage{amsfonts}
\usepackage{graphicx}
\usepackage{epstopdf}
\usepackage{algorithmic}
\usepackage{algorithm}
\usepackage{subcaption}
\ifpdf
  \DeclareGraphicsExtensions{.eps,.pdf,.png,.jpg}
\else
  \DeclareGraphicsExtensions{.eps}
\fi

% Prevent itemized lists from running into the left margin inside theorems and proofs
\usepackage{enumitem}
\setlist[enumerate]{leftmargin=.5in}
\setlist[itemize]{leftmargin=.5in}

% Add a serial/Oxford comma by default.

% Used for creating new theorem and remark environments
\newsiamremark{remark}{Remark}
\newsiamremark{hypothesis}{Hypothesis}
\crefname{hypothesis}{Hypothesis}{Hypotheses}
\newsiamthm{claim}{Claim}

% Sets running headers as well as PDF title and authors
\headers{CA-PCA: Manifold Dimension Estimation, Adapted for Curvature}{A. Gilbert and K. O'Neill}

% Title. If the supplement option is on, then "Supplementary Material"
% is automatically inserted before the title.
\title{CA-PCA: Manifold Dimension Estimation, Adapted for Curvature\thanks{Submitted to the editors May 25, 2023.}
\funding{}}

% Authors: full names plus addresses.
\author{Anna C. Gilbert\thanks{ Applied Mathematics Program, Yale University, New Haven, CT and Flatiron Institute, New York, NY
  (\email{anna.gilbert@yale.edu}).}
\and Kevin O'Neill\thanks{ The MITRE Corporation
(\email{koneill@mitre.org})}}

%Flatiron Institute, New York City, NY

\usepackage{amsopn}

\newcommand{\R}{\mathbb{R}}

\newcommand{\tr}{\text{Tr}}

\DeclareMathOperator*{\argmin}{argmin}

%%% Local Variables: 
%%% mode:latex
%%% TeX-master: "ex_article"
%%% End: 

% Optional PDF information
\ifpdf
\hypersetup{
  pdftitle={CA-PCA: Manifold Dimension Estimation, Adapted for Curvature},
  pdfauthor={Anna Gilbert and Kevin O'Neill}
}
\fi

% The next statement enables references to information in the
% supplement. See the xr-hyperref package for details.

%\externaldocument[][nocite]{ex_supplement}

% FundRef data to be entered by SIAM
%<funding-group specific-use="FundRef">
%<award-group>
%<funding-source>
%<named-content content-type="funder-name"> 
%</named-content> 
%<named-content content-type="funder-identifier"> 
%</named-content>
%</funding-source>
%<award-id> </award-id>
%</award-group>
%</funding-group>

\begin{document}

\maketitle

% REQUIRED
\begin{abstract}
The success of algorithms in the analysis of high-dimensional data is often attributed to the manifold hypothesis, which supposes that this data lie on or near a manifold of much lower dimension. It is often useful to determine or estimate the dimension of this manifold before performing dimension reduction, for instance. Existing methods for dimension estimation are calibrated using a flat unit ball. In this paper, we develop CA-PCA, a version of local PCA based instead on a calibration of a quadratic embedding, acknowledging the curvature of the underlying manifold. Numerous careful experiments show that this adaptation improves the estimator in a wide range of settings.
\end{abstract}

% REQUIRED
\begin{keywords}
dimension estimation, principal component analysis, manifold hypothesis, dimension reduction, intrinsic dimension
\end{keywords}

% REQUIRED
\begin{MSCcodes}
62H25, 62R30
\end{MSCcodes}

\section{Introduction}

Much of modern data analysis in high dimensions relies on the premise that data, while embedded in a high-dimensional space, lie on or near a submanifold of lower dimension. This allows one to embed the data in a space of lower dimension while preserving much of the essential structure, with benefits including faster computation and data visualization.

This lower dimension, hereafter referred to as the intrinsic dimension (ID) of the underlying manifold, often enters as a parameter of the dimension-reduction scheme. For instance, in each of the Johnson-Lindenstrauss-type results for manifolds by \cite{clarkson2008tighter} and \cite{baraniuk2009random} the target dimension depends on the ID. Furthermore, the ID is a parameter of popular dimension reduction methods such as t-SNE \cite{van2008visualizing} and multidimensional scaling \cite{chen2008multidimensional,france2010two}. Therefore, it may be beneficial to estimate the ID before running further analysis since compressing the data too much may destroy underlying structure and it may be computationally expensive to re-run algorithms with a new dimension parameter, if such an error is even detectable. \cite{block2022intrinsic} uses their ID estimator to obtain sample complexity bounds for Generative Adversarial Networks (GANs). An interesting direct use of ID is found in \cite{santos2022role}, in which it is interpreted as a measure of complexity or variability for basketball plays. To similar effect, \cite{altan2021estimating} uses intrinsic dimension to measure the complexity of the space of observed neural patterns in the human brain.

The literature on ID estimators is vast, and we refer the reader to \cite{camastra2016intrinsic} and \cite{campadelli2015intrinsic} for a comprehensive review. We highlight recent progress of \cite{MiraEtAl}, as well as \cite{grillet2022effective}, \cite{block2022intrinsic}, and \cite{lim2021tangent}, which prove results about the number of samples needed to estimate the ID with a given probability.

For our current analysis, we focus on the use of principal component analysis (PCA) as an ID estimator. There are a variety of ways this has been done (see \cite{bouveyron2011intrinsic,fukunaga1971algorithm,guan2009sparse,little2009estimation,verveer1995evaluation}), but each version of local PCA works roughly by finding the eigenvalues of the covariance matrix for a data point and its nearest neighbors. The ID is then estimated using the fact that we expect the eigenvalues to be high for eigenvectors which are close to the tangent space of the underlying manifold and the eigenvalues to be low for eigenvectors which are nearly orthogonal to the tangent space. We expect $d$ eigenvalues to be larger than the rest, where $d$ is the ID. The remaining $D-d$ eigenvalues will be small, but often nonzero due to effects of curvature or error in measurement.

Local PCA and other ID estimators are often calibrated via a subspace or unit ball intersected with a subspace. While a manifold may be locally well-approximated by its tangent space near a point, in practice one may not always have enough sampled data to zoom in sufficiently close. \textit{Our key insight is that if one expects data to lie on a manifold with nontrivial curvature rather than a subspace, we should calibrate the ID estimator using a manifold with nontrivial curvature rather than a subspace. In particular, our main contribution is to calibrate a version of local PCA to a quadratic embedding, producing a new ID estimator, which we deem curvature-adjusted PCA (CA-PCA).} Provided the underlying manifold is $C^2$-smooth, it will be well-approximated by a quadratic embedding, in fact, better approximated than by its tangent plane.

In principle, this insight could be applied to any number of existing ID estimators. We choose to apply it to a single estimator for which the adjustment is relatively simple, then test the new version extensively. In particular, we adapt a version of local PCA found in \cite{lim2021tangent} for the curvature of a manifold and apply the new version to various examples of data sampled from manifolds. We note that this adaption of local PCA necessitates an entirely novel analysis.

The main benefit of our estimator is that we are better able to estimate the ID in cases where the sample size is small. It allows one to consider either a larger number of nearest neighbors for the ID estimation as it adjusts for the neighborhood starting to ``go around'' the manifold or a smaller number of nearest neighbors with the increased power to distinguish between whether the variance in eigenvalues is due to curvature or statistical noise. Our method achieves this increase in accuracy by regularizing fit of the eigenvectors of the data to a curvature adjusted benchmark. 

The use of quadratic embeddings for data lying on or near a manifold is not new. It was used to approximate a manifold (of known dimension) in \cite{aamari2019nonasymptotic} and \cite{cazals2005estimating}. \cite{https://doi.org/10.1111/rssb.12508} approximate manifolds with spherelets. \cite{tyagi2013tangent} consider the application of PCA to neighborhoods of a manifold modeled via quadratic embedding to analyze estimation of tangent space, yet does not consider ID estimation. However, to the best of our knowledge, this is the first time quadratic embeddings have been used in combination with PCA to estimate the ID of a manifold.

Our paper is outlined as follows. In Section \ref{sec:background}, we describe the version of local PCA found in \cite{lim2021tangent}. In Section \ref{sec: theory}, we compute the first-order behavior of eigenvalues expected for the covariance matrix of a quadratic embedding, which is then used to derive our test. The formal calculations are saved for Appendix \ref{sec:appendix}. Experiments on data sampled from manifolds, both synthetic and simulated, are described in Section \ref{sec:experiments}. Discussion is in Section \ref{sec:conclusion}.

\section{Background and Problem Setup}\label{sec:background}

Let $\{x_1,...,x_k\}$ be a sample of points in $\R^D$. The covariance matrix $\Sigma[x_1,...,x_k]$ of this sample is constructed as follows. Let
$\bar{x}=\frac{1}{k}\sum_{i=1}^k x_i$
and define
\begin{equation*}
    \hat{\Sigma}[x_1,...,x_k]=\frac{1}{k-1}\sum_{i=1}^k (x_i-\bar{x})(x_i-\bar{x})^T,
\end{equation*}
where $\bar{x}$ and each $x_i$ are interpreted as column vectors. A continuous version $\Sigma[\mu]$ may be computed for probability measures $\mu$ on $\R^D$ by replacing the summation with integration, or expectation. Specifically,
\begin{equation*}
    \Sigma[\mu]=\mathbb{E}(x-\bar{x})(x-\bar{x})^T=\int_{\R^{D}}(x-\bar{x})(x-\bar{x})^T d\mu(x),
\end{equation*}
where
\begin{equation*}
    \bar{x}=\int_{\R^{D}}x d\mu(x).
\end{equation*}

In the case where $\mu$ is an arbitrary finite, nonnegative measure, we replace $d\mu(x)$ with $\frac{d\mu(x)}{\int_{\R^D}d\mu(x)}$ in both the above expressions.

Observe that $\hat{\Sigma}[x_1,...,x_k]$ and $\Sigma[\mu]$ are always symmetric, positive semidefinite matrices. Let $\vec{\lambda}\hat{\Sigma}[x_1,...,x_k]$ denote a vector consisting of the eigenvalues of $\hat{\Sigma}[x_1,...,x_k]$ in decreasing order and similarly for $\vec{\lambda}\Sigma[\mu]$.

The idea behind local PCA for ID is that if $x_1,...,x_k$ are sampled from a small neighborhood where the underlying manifold is well-approximated by its $d$-dimensional tangent space, then one expects the first $d$ elements of $\vec{\lambda}\hat{\Sigma}[x_1,...,x_k]$ to be much larger than the last $D-d$ elements. There are many ways to translate this observation into practice; see citations in the Introduction for reference. Here, we focus on a formulation of the test described by~\cite{lim2021tangent} which has the following benefits. First, it requires no human judgment or arbitrary threshold cutoffs. Second, it presents the possibility of a simple modification to adjust for curvature of the underlying manifold. Lastly, it is supported by evidence from our experiments as well as proof of statistical convergence \cite{lim2021tangent}.

\begin{lemma}[Lemma 6.1 in \cite{lim2021tangent}, Lemma 13 in \cite{arias2017spectral}]\label{lemma:distribution for ball}
    Let $W$ be a $d$-dimensional subspace of $\R^D$ ($D\ge d$) and let $\nu$ denote the $d$-dimensional Lebesgue measure on $W$ intersected with the unit ball of $\R^D$. Then
    \begin{equation}\label{eq:lambdadD}
        \vec{\lambda}\Sigma[\nu]:=\vec{\lambda}(d,D):=\frac{1}{d+2}(\underset{d\text{ times}}{\underbrace{1,...,1}},\underset{D-d\text{ times}}{\underbrace{0,...,0}}).
    \end{equation}
\end{lemma}

An elementary argument shows that for $r>0$ and $v\in\R^D$,
\begin{equation*}
    \frac{1}{r^2}\vec{\lambda}\hat{\Sigma}[rx_1-v,...,rx_k-v]=\vec{\lambda}\hat{\Sigma}[x_1,...,x_k].
\end{equation*}
Thus, given points sampled from a $d$-dimensional ball of radius $r$ centered away from the origin in $\R^D$, we expect $1/r^2$ times the associated eigenvalues to be close to $\vec{\lambda}(d,D)$.

Let $X\subset\R^D$ be a collection of points, presumably on or near a $d$-dimensional manifold embedded in $\R^D$. Let $x\in X$ and $\{x_1,...,x_k\}$ be the neighbors of $x$ in $X$ lying within distance $r$ of $x$. Then, the test described in \cite{lim2021tangent} determines an estimated ID $\hat{d}$ at $x$ by
\begin{equation}\label{eq:PCA_method}
    \hat{d}=\argmin_{1\le d\le D}\left\|\frac{1}{r^2}\vec{\lambda}\hat{\Sigma}[x_1,...,x_k]-\vec{\lambda}(d,D)\right\|_2,
\end{equation}
where $\vec{\lambda}(d,D)$ is as in \eqref{eq:lambdadD}.

\section{Main Results and Proposed Method: CA-PCA}
\label{sec: theory}

In this section, we first state our main result on the first-order behavior of the eigenvalues of the covariance matrix for the uniform distribution on a Riemannian manifold of dimension $d$. Next, we show how this can be used to derive our main algorithm, CA-PCA, before discussing some of the proof, most of which will be contained in Appendix \ref{sec:appendix}.

\subsection{First-Order Behavior of Eigenvalues}

Given a $C^2$, $d$-dimensional manifold $\mathcal{M}\subset\R^D$ and a point $p\in \mathcal{M}$, there exists an orthonormal set of coordinates $(x_1,...,x_D)$ for $\R^D$ such that $\mathcal{M}$ is locally the graph of a $C^2$ function $F:\R^d\mapsto \R^{D-d}$. Without loss of generality, we take $p$ to be the origin in $\R^D$.

Since $F$ is well-approximated by its Taylor series of order 2, we consider the quadratic embedding
\begin{equation}\label{eq:parametrization}
    Q:(x_1,...,x_d)\mapsto (x_1,...,x_d,Q_1(x_1,...,x_d),...,Q_{D-d}(x_1,...,x_d)),
\end{equation}
where $Q_j:\R^d\to\R$ is a quadratic form of the form $Q_j(x)=x^TM_jx$ for a symmetric $d\times d$ matrix $M_j$ ($1\le j\le D-d$). We denote the eigenvalues of $M_j$ as $\lambda_{1,j},...,\lambda_{d,j}$. Denote the image of $Q$ as $\mathcal{M}_Q$ and the $d$-dimensional Riemannian volume form on $\mathcal{M}_Q$ by $d\mu_Q$. Let $\Sigma$ denote the covariance matrix of $d\mu_Q$ restricted to the unit ball.

Write
\begin{equation}
    \Sigma=\begin{bmatrix}
        \Sigma_{1} & \Sigma_{12}\\
        \Sigma_{12}^T &\Sigma_2,
    \end{bmatrix}
\end{equation}
where $\Sigma_1$ is a $d\times d$ matrix, $\Sigma_{12}$ is a $d \times (D-d)$ matrix, and $\Sigma_{2}$ is a $(D-d) \times (D-d)$ matrix. We will show in Subsection \ref{subsec:proof} that $\Sigma_{12}=0$, motivating the following proposition.

To simplify the following expressions, we define the shorthands 
\begin{equation*}
    A_j=\sum_{i=1}^d \lambda_{i,j}^2, \hspace{.1 in} A=\sum_{j=1}^{D-d}A_j=\sum_{i,j}\lambda_{i,j}^2
\end{equation*}
and
\begin{equation*}
    B_j=\left(\sum_{i=1}^d \lambda_{i,j}\right)^2, \hspace{.1 in} B=\sum_{j=1}^{D-d}B_j=\sum_{j}\left(\sum_i\lambda_{i,j}\right)^2.
\end{equation*}

Observe that each of $A,A_j, B,B_j$ is $O(\Lambda^2)$, where $\Lambda=\max_j\|M_j\|$.

\begin{proposition}\label{prop:main prop}
    Let $\Sigma_1, \Sigma_2$ be as above. Denote the eigenvalues of $\Sigma_1$ by $\lambda_1,...,\lambda_d$.
    %and the eigenvalues of $\Sigma_2$ by $\lambda_{d+1},...,\lambda_D$.
    %and $\Lambda$ sufficiently small.
    Then,
    \begin{equation}\label{eq:upper trace formula}
       \tr(\Sigma_1)=\frac{d}{d+2}-\frac{2d}{(d+2)^2(d+4)}A-\frac{1}{(d+2)^2}B+O(\Lambda^4)
    \end{equation}
    and
    \begin{equation}\label{eq:sum lowest D-d}
        \tr(\Sigma_2)=\frac{2}{(d+2)^2}A+O(\Lambda^4).
    \end{equation}
    % If we are in the "uncorrelated case,"
    % \begin{equation}\label{eq:exact lower eigs}
    %     \lambda_{d+j}=\frac{1}{(d+1)(d+2)^2}B_j+\frac{2}{(d+1)(d+2)}A_j+O(\Lambda^3)
    % \end{equation}
    Furthermore, for $1\le i\le d$, 
    \begin{equation}\label{eq:upper bound for lambda_i}
        \lambda_i\ge \frac{1}{d+2}-\frac{20d+42}{(d+2)^2(d+4)}A-\frac{11d+20}{2(d+2)^2(d+4)}B+O(\Lambda^4)
    \end{equation}
and
\begin{equation}\label{eq:lower bound for lambda_i}
        \lambda_i\le \frac{1}{d+2}+\frac{5d+8}{(d+2)^2(d+4)}A+\frac{1}{2(d+2)^2}B + O(\Lambda^4).
    \end{equation}
\end{proposition}

% GIVE EXAMPLE of case where upper eigenvalues are different, like $Q_1(x,y)=x^2/4+0y^2.$

% (AS POSSIBLY referenced in the above example),
The bounds in \eqref{eq:upper bound for lambda_i} and \eqref{eq:lower bound for lambda_i} are not sharp. Our method of proof ignores much potential cancellation; however, it still reveals that $|\frac{1}{d+2}-\lambda_i|\le O(d^{-2}\sum_{j=d+1}^{D}\lambda_j^2)$.

\subsection{Proposed Method: CA-PCA}\label{subsec:actual alg}

Since $B$ may not generally be calculated as a function of $A$, we have not yet achieved an explicit relation between $\tr(\Sigma_1)$ and $\tr(\Sigma_2)$. Absent any further assumptions, we are unable to find any such relation and complete a derivation of our test. We take our ``best guess'' of how they may relate to be the expectation of what occurs when the eigenvalues are of random sign.

Specifically, let $\epsilon_{1,j},...,\epsilon_{d,j}$ be i.i.d. random variables taking on value 1 and -1, each with probability 1/2. Let $\lambda_{i,j}=\epsilon_{i,j}\alpha_{i,j}$, where $\alpha_{i,1},...,\alpha_{i,d}$ are fixed. Then,
\begin{equation*}
    \mathbb{E}B_j=\mathbb{E}(\sum_{i=1}^d \epsilon_{i,j}\alpha_{i,j})^2=\mathbb{E}\sum_{1\le i_1,i_2\le d}\epsilon_{i_1,j}\epsilon_{i_2,j}\alpha_{i_1,j}\alpha_{i_2,j}=\sum_{i=1}^d\alpha_{i,j}^2=\sum_{i=1}^d\lambda_{i,j}^2=A_j.
\end{equation*}

Thus, for the purpose of deriving our test in Subsection \ref{subsec:actual alg} we will assume $A_j=B_j$ for all $1\le j\le D-d$. When there are plenty of quadratic forms $Q$ such that all $\lambda_{i,j}>0$ and others where $A_j>B_j=0$ for all $j$, the above assumption of randomly signed eigenvalues is a proxy for an ``average'' choice of $Q$. 

Substituting $A=B$ into \eqref{eq:upper trace formula},
\begin{equation}\label{eq:upper trace A=B}
    \tr(\Sigma_1)=\frac{d}{d+2}-\frac{3d+4}{(d+2)^2(d+4)}A
\end{equation}
with average eigenvalue
\begin{equation}\label{eq:avg upper eigenvalue}
    \frac{1}{d+2}-\frac{3d+4}{d(d+2)^2(d+4)}A.
\end{equation}

% Under the assumption of randomly signed eigenvalues, for $1\le j\le D-d$ let
% \begin{equation}
%     A_j=\sum_{k=1}^d \lambda_{j,k}^2=(\sum_{k=1}^d \lambda_{j,k})^2
% \end{equation}
% and
% \begin{equation}
%     A=\sum_{j=1}^{D-d}A_j.
% \end{equation}

% Rewrite \eqref{eq:upper trace A=B} as
% \begin{equation}
%     \sum_{i=1}^d\lambda_i=\frac{d}{d+2}-dc_h(d)A,
% \end{equation}
% where $c_h(d)=\frac{5d^2+6d+8}{(d+2)^3(d+4)}$.

Proposition \ref{prop:main prop} provides a formula for $\sum_{i=1}^d\lambda_i=\tr(\Sigma_1)$. Ideally, we would like to know the individual values of $\lambda_i$ to compare to the eigenvalues coming from the sampled points. However, in practice, we are only given the sampled eigenvalues so we will assume that each $\lambda_i$ is equal. This is a somewhat reasonable assumption given the bounds in \eqref{eq:upper bound for lambda_i} and \eqref{eq:lower bound for lambda_i}. Furthermore, the sampled eigenvalues may differ from each other due to statistical noise, so it may be faulty to assume the difference is due to curvature effects in $\mathcal{M}$. In truth, $\lambda_1$ through $\lambda_d$ may still differ a little, but this approach allows us to most naturally extend \eqref{eq:PCA_method} to our proposed method.

Thus, we assume
\begin{equation*}
    \lambda_i=\frac{1}{d+2}-c_h(d)A, 1\le i\le d,
\end{equation*}
where $c_h(d)=\frac{3d+4}{d(d+2)^2(d+4)}$ by \eqref{eq:avg upper eigenvalue}. We assume $\Lambda$ is small so we may ignore the $O(\Lambda^4)$ terms.

Letting $c_l(d)=\frac{2}{(d+2)^2}$, rewrite \eqref{eq:sum lowest D-d} as
\begin{equation*}
    \sum_{j=d+1}^D\lambda_j=c_l(d)\sum_{j=1}^{D-d}A_{j}=c_l(d)A.
\end{equation*}

By substitution,
\begin{equation*}
    \frac{1}{d+2}=\lambda_i+\frac{c_h(d)}{c_l(d)}\sum_{j=d+1}^D\lambda_j=\lambda_i+c(d)\sum_{j=d+1}^D\lambda_j.
\end{equation*}
by setting $c(d)=c_h(d)/c_l(d)=(3d+4)/(2d(d+4))$.

Given $x\in X$, let $x_1,...,x_{k+1}$ denote the $k+1$ nearest neighbors of $x$, in increasing order of distance from $x$. Let $r = (\|x - x_k\|_2 + \|x - x_{k+1}\|_2)/2$ and determine sample eigenvalues $\hat{ \lambda}_1\ge...\ge \hat{\lambda}_D$ from the matrix $\frac{1}{r^2} \hat \Sigma [x_1,...,x_k]$. Statistically, we expect that as the number of samples increases, $(\hat\lambda_1,...,\hat\lambda_d)$ will converge to the eigenvalues of $\Sigma_1$ and $(\hat\lambda_{d+1},...,\hat\lambda_D)$ will converge to the eigenvalues of $\Sigma_2$.

When the neighborhood of points is taken at a reasonable scale relative to the curvature of the manifold, $\Lambda$ is small; hence, by Proposition \ref{prop:main prop} we expect the eigenvalues of $\Sigma_1$ to be greater than the eigenvalues of $\Sigma_2$. Given reasonable curvature and sample size, we may treat $(\hat\lambda_1,\ldots,\hat\lambda_d)$ as ``coming from'' $\Sigma_1$, $(\hat\lambda_{d+1},\ldots,\hat\lambda_D)$ as coming from $\Sigma_2$, and expect
\begin{equation}\label{eq:define modified lambda}
\lambda^{(d)}_i=\hat\lambda_i+c(d)\sum_{j=d+1}^D\hat{\lambda}_j\approx \frac{1}{d+2}, 1\le i\le d.
\end{equation}

Here, we make our first adjustment to the above theory. We note that the $(\frac{1}{d+2}\sum_{i=1}^d\lambda_{i,j}+O(\Lambda^2))^2$ term in \eqref{eq:for2xfactor} always leads to a strictly \emph{negative} coefficient on the 4th degree terms. Thus, we expect 
\begin{equation*}
    \sum_{j=d+1}^D\hat{\lambda}_j<\frac{2}{(d+2)^2}A.
\end{equation*}

To make up for this, we need to add more to the upper eigenvalues to obtain $\frac{1}{d+2}$, though this additional amount may be difficult to compute. For simplicity, we simply add double the amount as before and adjusting \eqref{eq:define modified lambda}, we now expect
\begin{equation}\label{eq:into_factor2}
    \lambda^{(d)}_i:=\hat\lambda_i+2c(d)\sum_{j=d+1}^D\hat{\lambda}_j\approx \frac{1}{d+2}, 1\le i\le d.
\end{equation}

Set $\lambda^{(d)}_j=0$ for $d+1\le j\le D$. Motivated by \eqref{eq:PCA_method}, one might simply choose $1\le d\le D$ to minimize the quantity
\begin{equation}\label{eq:tempting test error}
    \|\vec{\lambda}(d,D)-(\lambda^{(d)}_1,...,\lambda^{(d)}_D)\|_2=\sqrt{\sum_{i=1}^d\Big(\frac{1}{d+2}-\lambda^{(d)}_i\Big)^2},
\end{equation}
where $\lambda^{(d)}_i$ is as in \eqref{eq:into_factor2}.
However, a small amount of statistical noise can easily lead us to pick the wrong value of $d$.

For example, suppose our data are sampled from $\R^2$ and computing principal component analysis on them returns the eigenvalues $(0.21,0.15)$. Taking $d=2$, the value of \eqref{eq:tempting test error} is $\|(0.25,0.25)-(0.21,0.15)\|_2=.1077$. Taking $d=1$, then we add $14/10*0.15$ back to $0.21$ to get $(\lambda^{(1)}_1,\lambda^{(1)}_2)=(0.42,0)$, then compute \eqref{eq:tempting test error} to be $\|(1/3,0)-(.42,0)\|_2=0.08667$. We pick the dimension corresponding to the smaller error; thus, $d=1$.

However, the second eigenvalue 0.15 in this example corresponds to a very high curvature of our manifold. For the quadratic embedding $x\mapsto (x,cx^2)$, our model predicts $c=\sqrt{27/40}$, in which case we are working at larger scales relative to our manifold, violating small $\Lambda$ assumptions.
%and the local behavior well-approximates the global behavior.

Thus, we choose
\begin{equation}\label{eq:error to be minimized}
    \hat{d}=\argmin_{1\le d\le D}\|\vec{\lambda}(d,D)-(\lambda^{(d)}_1,\ldots,\lambda^{(d)}_D)\|_2+2\sum_{j=d+1}^D \hat{\lambda}_j.
\end{equation}

One may consider this modification as analogous to LASSO, in which one attempts to minimize an $\ell^2$ error in conjunction with the size of the coefficients. The idea is that each $\hat{\lambda}_j$ ($d+1\le j\le D$) represents, in some sense, the total curvature of the manifold in the $j$-th coordinate, perhaps distributed according to some double-exponential priors. Introduction of the regularizer guards against the model assuming too large of a curvature (which would violate our small curvature assumption). The factor of 2 is introduced in light of the factor of 2 in \eqref{eq:into_factor2}. 

% ***********

% Thus, we
% choose $1\le d\le D$ to minimize
% \begin{equation}
%     \|\vec{\lambda}(d,D)-(\lambda^{(d)}_1,...,\lambda^{(d)}_D)\|_2+\frac{1-\delta_D(d)}{D-d}\|(\hat{\lambda}_1,...,\hat{\lambda}_D)-(\lambda^{(d)}_1,...,\lambda^{(d)}_D)\|_1,
% \end{equation}
% where $\delta_D(D)=1$ and $\delta_D(d)=0$ for $d\ne D$. (In the case $d=D$, we take 0/0=0.)

% Here, the second term in \eqref{eq:error to be minimized} is taken to represent an implied notion of the curvature of the manifold. We use the $\ell^1$ norm since the lower eigenvalues have an additive effect in modifying the upper ones. Furthermore, a $1/(D-d)$ weight is chosen for this second term to make it represent the average of the lowest $D-d$ eigenvalues, or in other words, the average of the curvatures of all of the $Q_j$. If our data were to be remeasured with an additional feature/dimension, we would need to include an additional $Q_j$ to describe the embedding; however, this shouldn't change the ID nor the eigenvalues of the other $Q_j$.
% ************

\begin{algorithm}
\caption{CA-PCA}\label{alg:CAPCA}
\begin{algorithmic}
\STATE \textbf{Input:} $X, N, k$ 
\FOR {$n = 1,...,N$}
    \STATE Sample $x \in X$ uniformly at random
    \STATE Choose $\{x_1,...,x_{k+1}\}$ $(k+1)$-nearest neighbors to $x$
    \STATE Set $r = (\|x - x_k\|_2 + \|x - x_{k+1}\|_2)/2$
    \STATE Form $\hat \Sigma [x_1,...,x_k]$
    \STATE Calculate eigenvalues
    \[
        (\hat \lambda_1,...,\hat \lambda_D) = 
            \frac{1}{r^2}\vec{\lambda} \hat \Sigma [x_1,...,x_k]
    \]
    \FOR {$d=1,...,D$}
        \STATE Compute $(\lambda^{(d)}_1,...,\lambda^{(d)}_D)$ by \eqref{eq:into_factor2}:
        \begin{equation*}        \lambda^{(d)}_i:=\hat\lambda_i+(3d+4)/(d(d+4))\sum_{j=d+1}^D\hat{\lambda}_j \text{ for } 1\le i\le d
        \end{equation*}
        and
        \begin{equation*}
            \lambda_i^{(d)}:=0 \text{ for } d+1\le i\le D.
        \end{equation*}
    \ENDFOR
    \STATE Select
        \begin{align*}
            \hat d^{(n)} &= \argmin_{1 \leq d \le D}
           \|\vec{\lambda}(d,D)-(\lambda^{(d)}_1,\ldots,\lambda^{(d)}_D)\|_2+2\sum_{j=d+1}^D \hat{\lambda}_j.
        \end{align*}
\ENDFOR
\STATE Return {\sc Mean} $(\hat d^{(1)},...,\hat d^{(N)})$
\end{algorithmic}
\end{algorithm}

In Algorithm \ref{alg:CAPCA}, we could sample every point of $X$ once before averaging the results, but this may be computationally expensive. We could also sample a single point, though this may raise the possibility of being misled by an outlier. As a compromise, we choose to sample a random number of points, depending on how the user decides to balance their available computational power with their desired level of confidence.

Another parameter to be determined by the user is the number of nearest neighbors $k$. Here, the challenge is to balance the possibility of the $k$ nearest neighbors starting to ``wrap around'' the manifold for large $k$ with the small sample size one has for small $k$. While CA-PCA is designed to adjust for curvature, that adjustment has its limits. Large values of $k$ would potentially make for longer computation of principal component analysis; however, this effect should be negligible for the values of $k$ chosen in our experiments.

We make no claims about a ``best'' way to choose $k$, except that one would expect it to depend on prior assumptions about the size of the curvature (risk of wrap-around) and the value of $N$ (for perhaps offering some counterbalance to small sample size). However, in Section \ref{sec:experiments} we will run experiments with varying values of $k$ so one may compare methods with different values and perhaps begin to deduce patterns about best choice.

\subsection{Proof of Proposition \ref{prop:main prop}}\label{subsec:proof}

Our goal is to compute the eigenvalues of the covariance matrix for the uniform (with respect to $\mu_Q$) distribution on $\mathcal{M}_Q\cap B_r(0)$ for some radius $r>0$. By rescaling, take $r=1$. We denote this matrix $\Sigma$. For this purpose, we define $S:\R^d\to \R$, the density function with respect to $d\mu_Q$, as $d\mu_Q(x_1,...,x_d)=S(x_1,...,x_d)dx$, where $dx$ is the Lebesgue measure on $\R^d$.

Define quadratic forms $\tilde{Q}_j(x)=x^TM_j^2x$, and note that the eigenvalues of the matrix $M_j^2$ are $\lambda_{j,1}^2,...,\lambda_{j,d}^2$.

%Let $\Lambda=\max_{1\le k\le D-d}\|M_k\|$.

\begin{lemma}\label{lemma:form for S(x)}
Under the above assumptions and notation, the density function of $d\mu_Q$ is
    \begin{equation}\label{eq:form for S(x)}
    S(x)= 1+2\sum_{j=1}^{D-d}\tilde{Q}_j(x)+O(\Lambda^4).
    \end{equation}
\end{lemma}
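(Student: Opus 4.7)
The plan is to read off $S(x)$ from the standard formula for the volume form of a graph parameterization and then Taylor-expand to the appropriate order. If we denote the parameterization by $\phi(x)=(x,Q_1(x),\ldots,Q_{D-d}(x))$, then the Riemannian volume form is $S(x)\,dx=\sqrt{\det(D\phi(x)^T D\phi(x))}\,dx$. The Jacobian splits as $D\phi=\bigl(\begin{smallmatrix} I_d\\ J_Q(x)\end{smallmatrix}\bigr)$, where $J_Q(x)$ is the $(D-d)\times d$ matrix whose $j$-th row is $\nabla Q_j(x)^T=(2M_j x)^T$, using that each $Q_j$ is a quadratic form with symmetric matrix $M_j$. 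Hence
\begin{equation*}
D\phi^T D\phi \;=\; I_d+J_Q^T J_Q,\qquad S(x)=\sqrt{\det(I_d+J_Q^T J_Q)}.
\end{equation*}

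Next I would compute the relevant traces explicitly. Row $j$ of $J_Q$ being $2(M_j x)^T$ gives
\begin{equation*}
J_Q^T J_Q \;=\; 4\sum_{j=1}^{D-d}(M_j x)(M_j x)^T,
\end{equation*}
so that $\operatorname{tr}(J_Q^T J_Q)=4\sum_{j=1}^{D-d}\|M_j x\|^2=4\sum_{j=1}^{D-d} x^T M_j^2 x=4\sum_{j=1}^{D-d}\tilde Q_j(x)$. Each summand is $O(\Lambda^2)$ on the unit ball, so the entries of $J_Q^TJ_Q$ are $O(\Lambda^2)$ as well.

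For the final step, I would apply the expansion $\det(I_d+A)=1+\operatorname{tr}(A)+O(\|A\|^2)$, valid for any $d\times d$ matrix $A$ with $\|A\|$ small (the implied constant depends only on $d$), followed by $\sqrt{1+t}=1+\tfrac{1}{2}t+O(t^2)$. With $A=J_Q^TJ_Q$ of operator norm $O(\Lambda^2)$ on the unit ball, this yields
\begin{equation*}
S(x)=1+\tfrac{1}{2}\operatorname{tr}(J_Q^T J_Q)+O(\Lambda^4)=1+2\sum_{j=1}^{D-d}\tilde Q_j(x)+O(\Lambda^4),
\end{equation*}
which is exactly \eqref{eq:form for S(x)}.

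The calculation is essentially mechanical; the only thing to keep honest is the error bookkeeping. Since the eigenvalues of $M_j$ are controlled by $\Lambda$ and we are working on the unit ball, both $\operatorname{tr}((J_Q^TJ_Q)^2)$ and $(\operatorname{tr}(J_Q^TJ_Q))^2$ are $O(\Lambda^4)$, so no quadratic-in-$A$ term survives at the advertised order. I do not foresee a substantive obstacle, although a cleaner presentation might avoid expanding $\det$ and instead use $\log\det(I+A)=\operatorname{tr}(A)+O(\|A\|^2)$, which gives the same conclusion after exponentiating and taking the square root.
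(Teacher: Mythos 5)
Your proof is correct and follows essentially the same route as the paper: both express $S(x)$ as $\sqrt{\det(I_d + J_Q^T J_Q)}$ (the paper writes the Gram matrix entrywise as $g_{ij}=\delta_{ij}+\sum_k (2M_ke_i\cdot x)(2M_ke_j\cdot x)$), compute $\operatorname{tr}(J_Q^TJ_Q)=4\sum_j\tilde Q_j(x)$, and expand the determinant and square root to first order in the $O(\Lambda^2)$ perturbation. The only cosmetic difference is that you invoke $\det(I+A)=1+\operatorname{tr}(A)+O(\|A\|^2)$ directly, while the paper reaches the same conclusion by writing the eigenvalues of the Gram matrix as $1+c_i$ and multiplying them out.
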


\begin{proof}
Given \eqref{eq:parametrization},
\begin{equation*}
    S(x)=\sqrt{|\det(G(x))|},
\end{equation*}
where
\begin{equation*}
    G(x)=J^T(x)J(x)
\end{equation*}
and
\begin{equation*}
    J(x)=\begin{bmatrix}
        I_d\\
        \nabla Q_1(x)^T\\
        \vdots\\
        \nabla Q_{D-d}(x)^T
    \end{bmatrix}=\begin{bmatrix}
        I_d\\
        2(M_1x)^T\\
        \vdots\\
        2(M_{D-d}x)^T
    \end{bmatrix}.
\end{equation*}

Thus, taking advantage of the symmetry of the $M_j$,
\begin{equation}\label{eq:matrix entries}
    G(x)=I_d+4\sum_{j=1}^{D-d}M_jxx^TM_j.
\end{equation}

As a result,
\begin{align*}
    \tr (G(x))&= d+4\sum_{j=1}^{D-d}\tr(M_jxx^TM_j)\\
    &=d+4\sum_{j=1}^{D-d} x^TM_j^2x\\
    &=d+4\sum_{j=1}^{D-d}\tilde{Q}_j(x).
\end{align*}

% $G(x)=(g_{ij}(x))_{1\le i,j,\le d}$, $g_{ij}(x)=g_i(x)\cdot g_j(x)$
% and
% \begin{equation*}
%     g_i(x)=\left(0,...,1,...,0,\frac{\partial Q_1}{\partial x_i}(x),...,\frac{\partial Q_{D-d}}{\partial x_i}(x)\right).
% \end{equation*}

% Since $\nabla Q_j(x)=2M_jx$, we have
% \begin{equation*}
%      g_i(x)=\left(0,...,1,...,0,(2M_1e_i)\cdot x,...,(2M_{D-d}e_i)\cdot x\right),
% \end{equation*}
% where $e_1,...,e_d$ is the standard ordered basis of $\R^d$.

% Thus,
% \begin{equation}\label{eq:matrix entries}
%     g_{ij}(x)=\delta_{ij}+\sum_{k=1}^{D-d}(2M_ke_i\cdot x)(2M_ke_j\cdot x),
% \end{equation}
% and the trace of the matrix $(g_{ij})=(g_{ij})_{1\le i,j,\le d}$ is
% \begin{equation}\label{eq:trace 1}
%     \tr(g_{ij}(x))=d+\sum_{j=1}^{D-d}\sum_{i=1}^d \left|2M_je_i\cdot x\right|^2.
% \end{equation}

% Consider a symmetric $d\times d$ matrix $M$ with columns $m_1,...,m_d$ (thus $Me_i=m_i$). Then,
% \begin{equation}\label{eq:single symmetric matrix}
%     x^TM^2x=(x^TM^T)\cdot (Mx)=\|Mx\|^2=\sum_{i=1}^d|m_i\cdot x|^2.
% \end{equation}

% Plugging \eqref{eq:single symmetric matrix} applied to $M=M_j$ ($1\le j\le D-d$) into \eqref{eq:trace 1},
% \begin{equation}\label{eq:trace 2}
%     \tr(g_{ij}(x))_{1\le i,j\le d}=d+4\sum_{j=1}^{D-d}\tilde{Q}_j(x).
% \end{equation}

By \eqref{eq:matrix entries}, $G(x)-I_d$ has entries of order $O(\Lambda^2)$. Thus, the eigenvalues of $G(x)$ each differ from 1 by $O(\Lambda^2)$. That is, for fixed $x_0$, and denoting the eigenvalues of $G(x)$ by $\mu_1,...,\mu_d$,
\begin{equation*}
    \mu_i=1+c_i,
\end{equation*}
where 
$c_i=O(\Lambda^2)$
and
\begin{equation*}
    \sum_{i=1}^d c_i=4\sum_{j=1}^{D-d}\tilde{Q}_j(x_0).
\end{equation*}

Thus,
\begin{equation*}
    \det(G(x))=\prod_{i=1}^d(1+c_i)=1+4\sum_{j=1}^{D-d}\tilde{Q}_j(x)+O(\Lambda^4).
\end{equation*}

By the Taylor expansion $\sqrt{1+t}=1+\frac{1}{2}t+O(t^2)$, we have the desired conclusion.
\end{proof}

Recall, our goal is to compute the eigenvalues of the covariance matrix $\Sigma$.
%Each entry of $\Sigma$ is an integral computed with respect to $d\mu_Q$. These integrals will be done with respect to our parametrization in \eqref{eq:parametrization}, that is, with respect to the surface measure $S(x)dx$ and over the region
Let
\begin{equation*}
    R=\{x\in\R^d:|x|^2+\sum_{j=1}^{D-d}(Q_j(x))^2\le 1\}
\end{equation*}%to restrict to $\mathcal{M}\cap B_1(0)$.
and
\begin{equation*}
    \tilde{S}(x)=\frac{S(x)}{\int_R S(x)dx}.
\end{equation*}

By definition,
\begin{equation}\label{eq:first_integral}
    (\Sigma_1)_{i,j}=\int_R (x_i-\bar{x}_i)(x_j-\bar{x}_j) \tilde{S}(x)dx,
\end{equation}
\begin{equation*}
    (\Sigma_{12})_{i,j}=\int_R (x_i-\bar{x}_i)(Q_j(x)-\bar{Q}_j)\tilde{S}(x)dx,
\end{equation*}
and
\begin{equation}\label{eq:second_integral}
    (\Sigma_2)_{i,j}=\int_R (Q_i(x)-\bar{Q}_i)(Q_j(x)-\bar{Q}_j)\tilde{S}(x)dx.
\end{equation}

In the above,
\begin{equation*}
    \bar{x}_i=\int_R x_i\tilde{S}(x)dx, \hspace{.25 in}
    \bar{Q}_i=\int_R Q_i(x)\tilde{S}(x)dx.
\end{equation*}

% Then, each of the entries of $\Sigma$ is of one of the following forms
% \begin{enumerate}
%     \item $I_1(i)=\int_R (x_i-\bar{x}_i)^2 \tilde{S}(x)dx$ ($1\le i\le d$)
%     \item $I_2(i,j)=\int_R (x_i-\bar{x}_i)(x_j-\bar{x}_j) \tilde{S}(x)dx$ ($1\le i\neq j\le d$)
%     \item $I_3(i,j)=\int_R (x_i-\bar{x}_i)(Q_j(x)-\bar{Q}_j)\tilde{S}(x)dx$ ($1\le i\le d, 1\le j\le D-d$)
%     \item $I_4(i)=\int_R (Q_i(x)^2-\bar{Q}_i^2)\tilde{S}(x)dx$ ($1\le i\le D-d$)
%     \item $I_5(i,j)=\int_R (Q_i(x)-\bar{Q}_i)(Q_j(x)-\bar{Q}_j)\tilde{S}(x)dx$ ($1\le i\neq j\le D-d$),
% \end{enumerate}
% where

Since $x_i$ is an odd function and $R$ is a region symmetric about the origin, we immediately see that $\bar{x}_i=0$ for $1\le i\le d$. Furthermore, $Q_j(x)-\bar{Q}_j$ is an even function and $x_i-\bar{x}_i=x_i$ is odd, so $\Sigma_{12}=0$. As a result, $\Sigma$ is a block diagonal matrix with an upper block $\Sigma_1$ of size $d\times d$ and a lower block $\Sigma_2$ of size $(D-d)\times(D-d)$, that is

\begin{equation}
\Sigma=\left[\begin{array}{c|ccc}
\Sigma_1 & &0 &\\
\hline
& & &\\
0& & \Sigma_2 &\\
& & &
\end{array}
\right].
\end{equation}

In order to determine $\tr(\Sigma_1)$ and $\tr(\Sigma_2)$, we only need the diagonal entries of $\Sigma$, that is, the integrals in \eqref{eq:first_integral} and \eqref{eq:second_integral}. To determine the bounds on $\lambda_1,...,\lambda_d$, we compute \eqref{eq:first_integral} with $x_i$ replaced by an arbitrary direction which may be chosen to maximize or minimize the integral. We perform these computations in Appendix \ref{sec:appendix}. The derivation is long, but it is a straightforward application of integration in radial coordinates, integration of polynomial functions over the sphere, truncation of Taylor series, and Lemma \ref{lemma:form for S(x)}.

In \cite{tyagi2013tangent}, the authors consider separately the ``uncorrelated case,'' corresponding to taking $(\Sigma_2)_{i,j}=0$ for all $1\le i\neq j\le D-d$ above. One could use this assumption to calculate the eigenvalues of $\Sigma_2$ explicitly ($\frac{2}{(d+2)^2}A_j$, as will be shown in the proof of Proposition \ref{prop:main prop}); however, only the sum of these eigenvalues will be needed for comparison with the eigenvalues of $\Sigma_1$, so we will not need to make this assumption here.

\section{Experiments}\label{sec:experiments}

\subsection{Outline}

We ran CA-PCA (\eqref{eq:error to be minimized}) on a number of examples of point clouds, each contained in $\R^D$ for some $D$ and (presumably) sampled from manifolds. The results were compared with the approach from \cite{lim2021tangent} (PCA), seen in \eqref{eq:PCA_method}, and the maximum likelihood estimator from \cite{levina2004maximum} (LB).

To fully compare CA-PCA to all other existing, competitive ID estimators would be onerous. \cite{camastra2016intrinsic} provides five criteria for a successful estimator: computational feasability, robustness to multiscaling, robustness to high (intrinsic) dimension, a work envelope, and accuracy. Among eleven estimators, none is the clear favorite as different tests satisfy different criteria. For this reason, and to reflect our main insight, we emphasize a comparison between CA-PCA and PCA. However, we still compare CA-PCA with LB to demonstrate that our estimator is at least competitive with another popular method.

%Except where otherwise noted, the chosen manifolds are without boundary. This is because sampling points from a boundary may lead to a consistent underestimation of dimension; our goal is to 

Given a point cloud $X\subset\R^D$, a random point of $X$ was sampled, denoted $x$. The $k$ nearest neighbors of $x$ were computed and used to run principal component analysis, from which $D$ eigenvalues were obtained. These eigenvalues were normalized by multiplying by $1/r^2$, where $r$ was chosen by taking the arithmetic mean of the distance of the $k$-th nearest neighbor of $x$ and the distance of the $(k+1)$-st nearest neighbor of $x$. The CA-PCA test was run to determine an estimated ID. This process was repeated by sampling $N$ points randomly with replacement from $X$ and all of the estimated dimensions were averaged to produce a final estimate. An average estimated ID was then computed over a range of values of $k$, although the same points $x$ were sampled for each $k$ to reduce time spent computing the nearest neighbors. Our results are depicted by graphing the averaged estimated ID as a function of $k$ and comparing to the actual ID $d$. $N=200$ unless otherwise stated.

The same process was used for the PCA and LB tests, except for the LB test no computation or rescaling of eigenvalues was needed, merely distances between points. Also, the same randomly sampled points $x$ were used for each test.

In the case of point clouds in $\R^D$ for large $D$ (see Isomap faces and airplane photos below), the $D-k$ tail eigenvalues equal to zero were removed, though this had no practical effect on the algorithm other than to remove the possibility that $\hat{d}>k$.

When possible, manifolds are chosen to be without boundary, as points sampled from boundaries may produce a consistent underestimation of the ID, introducing a bias which may interfere with simple comparison of PCA and CA-PCA. In particular, the eigenvalues obtained from applying principal component analysis to a half-ball are precisely equal to those of the full ball, except one is smaller. If that smaller eigenvalue is treated as noise, the estimated dimension will be one lower than desired. CA-PCA is designed to adapt to effects of curvature, but not this scenario. (However, we will see it does better at handling the boundary issue than PCA and LB, at least in a couple of examples.)

\subsection{Some Synthetic Data}

One possible parametrization for a Klein bottle embedded in $\R^4$ is given by
\begin{align*}
    x&=(a+b\cos(v))\cos(u)\\
    y&=(a+b\cos(v))\sin(u)\\
    z&=b\sin(v)\cos(u/2)\\
    w&=b\sin(v)\sin(u/2)
\end{align*}
for $u,v\in[0,2\pi)$. Choosing $a=10$ and $b=5$, 400 points were sampled via the uniform distribution for $u,v$ on $[0,2\pi)^2$. While this does not correspond exactly to the uniform distribution on the Klein bottle, it does correspond to a smooth density. A smooth density may be considered locally constant, which would be good enough for any method which takes as input nearby points from the manifold, such as the three we are considering.

Running the three tests on these 400 points, we see that both PCA and CA-PCA appear to converge to the correct dimension of 2, although the latter is much faster (see Figure \ref{fig:Klein}). The LB estimator fails to converge, though it is much closer to 2 than 3. On this note, it is also clear that CA-PCA provides an estimated dimension closer to 2 than 1 for smaller values of $k$ than PCA.

\begin{figure}[hbt!]
    \centering
    \begin{subfigure}[t]{0.3\columnwidth}
         \centering
         \includegraphics[width=\columnwidth]{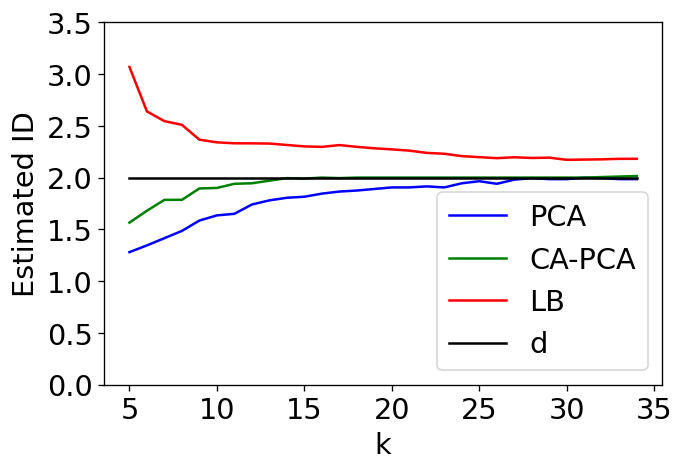}
         \caption{Estimated ID for Klein bottle in $\R^4$}
         \label{fig:Klein}
     \end{subfigure}
     \hfill
     \begin{subfigure}[t]{0.3\columnwidth}
         \centering
         \includegraphics[width=\columnwidth]{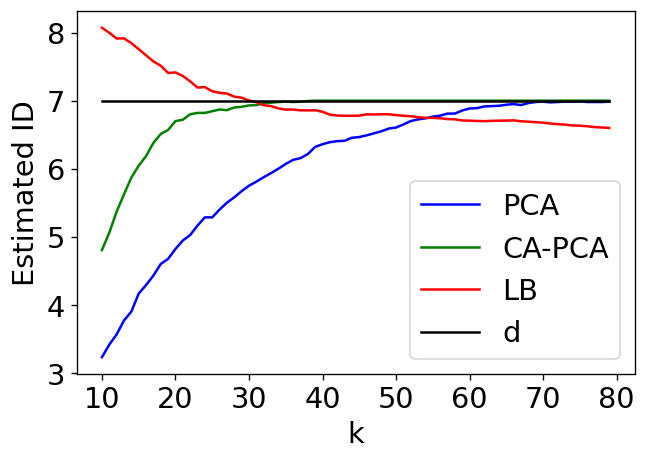}
         \caption{Estimated ID for $S^7\subset\R^8$}
         \label{fig:7-sphere}
     \end{subfigure}
     \hfill
     \begin{subfigure}[t]{0.3\columnwidth}
         \centering
         \includegraphics[width=\columnwidth]{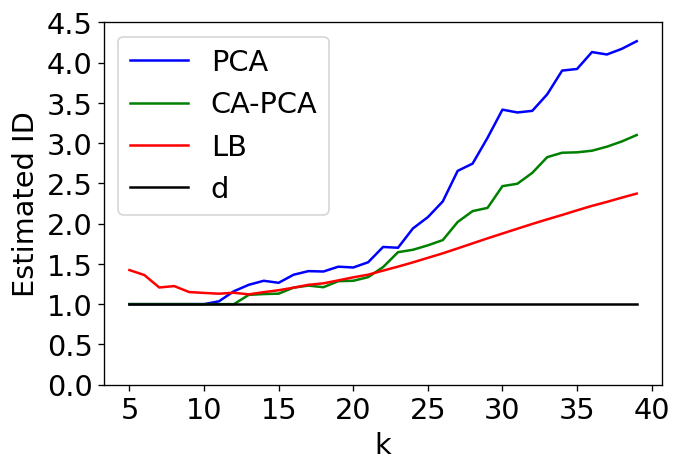}
         \caption{Estimated ID for curve in $\R^8$}
         \label{fig:curve in R^8}
     \end{subfigure}
        \caption{Results for Synthetic Data}
        \label{fig:synthetic}
\end{figure}

We randomly sampled 5,000 points from the unit sphere $S^7\subset\R^8$ with respect to the uniform measure. Here, CA-PCA converges to the true dimension of 7 much faster than PCA (see Figure \ref{fig:7-sphere}). While LB reaches the true dimension first, the estimate decreases (gets worse) as $k$ increases.

For $S^7$ and the Klein bottle, CA-PCA consistently produces a higher estimated ID than PCA. One may suspect that this behavior partially explains some of the faster convergence; however, the next example shows that CA-PCA can produce a lower estimate than PCA.

Consider the parametrization of a curve in $\R^8$ via the map
\begin{equation}\label{eq:fancy_curve}
\theta\mapsto(\cos\theta,\sin\theta,\cos2\theta,...,\cos4\theta,\sin4\theta), \theta\in[0,2\pi).
\end{equation}

A point cloud in $\R^8$ was determined by sampling 100 points from this curve, uniformly randomly over $\theta\in[0,2\pi)$. The results of the tests are shown in Figure \ref{fig:curve in R^8}. As the number of nearest neighbors increases and the collection of nearest neighbors wraps further around the curve, each of the tests provides an estimated ID increasingly higher than the true ID of 1. However, CA-PCA does so at a slower rate than PCA and provides a lower ID than PCA for each value of $k$. This is precisely what we expected CA-PCA to do: When the second-largest or third-largest eigenvalue increases, the method is more able to detect that it comes from the curvature of the manifold, rather than the dimension. Moving the mass of these lower eigenvalues onto the higher eigenvalue(s) is recognized as consistent with what we expect from a manifold. PCA, on the other hand, cannot make this recognition and is more likely to (falsely) determine that the manifold is of higher dimension.

This begs the question of how CA-PCA is able to (correctly) provide higher estimates of dimension for the examples of the Klein bottle and $S^7$. We suspect that the following occurs. At small values of $k$, the nearest neighbors are unlikely to fully vary in each direction of the tangent space of the manifold, leading to an eigenvalue or two which is lower than would otherwise be expected. PCA ``thinks'' that these lower eigenvalues are due to curvature, whereas CA-PCA is properly calibrated to determine they are not. CA-PCA ``knows'' that if these eigenvalues are from curvature they should relate to the other eigenvalues in a particular manner. The regularizer punishes the treatment of such eigenvalues as curvature if they are too large.

We next move to some more difficult examples, where both the dimension and codimension are higher than in most previous examples. In Figure \ref{fig:SO(5)}, we have the results of the tests applied to 20,000 points sampled randomly from $SO(5)$ with respect to the Haar measure and viewed as 5x5 matrices (thus as elements of $\R^{5\times5}\cong \R^{25}$). Here, CA-PCA is consistently a slightly better estimator than PCA for all values of $k\le 50$ before both agree with the actual ID.

\begin{figure}[hbt!]
    \centering
    \begin{subfigure}[t]{0.45\columnwidth}
         \centering
         \includegraphics[width=\columnwidth]{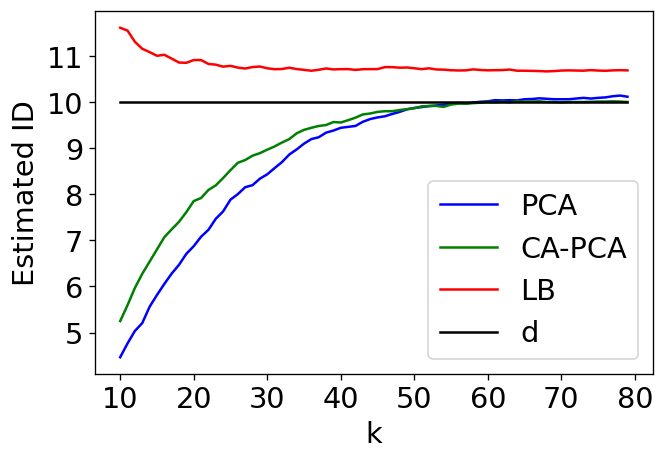}
         \caption{Estimated ID for $SO(5)$}
         \label{fig:SO(5)}
     \end{subfigure}
     \hfill
     \begin{subfigure}[t]{0.45\columnwidth}
         \centering
         \includegraphics[width=\columnwidth]{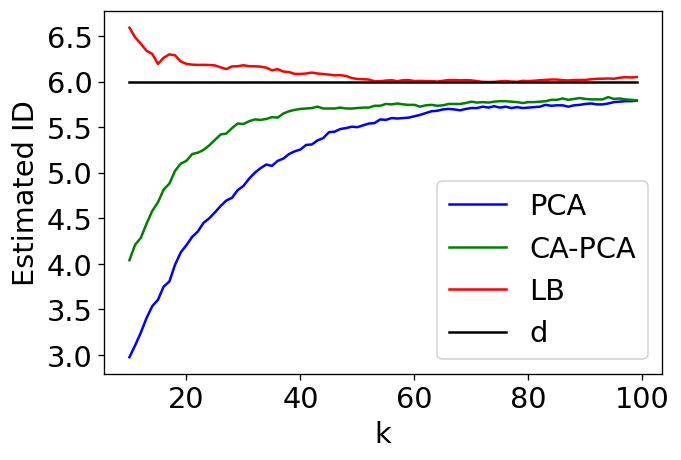}
         \caption{Estimated ID for a 6-dimensional Lie group}
         \label{fig:Lie group 6d}
     \end{subfigure}
     \caption{Synthetic Manifolds in Higher Dimensions}
        \label{fig:higher-d}
\end{figure}

The next example is a Lie group embedded in $\R^{15}$, constructed by taking the direct sum of the three-dimensional $SO(3)$ viewed as a subset of $\R^{3\times3}\cong \R^9$ and the 3-torus viewed as a subset of $\R^6$ through the parameterization
\begin{equation*}
    (\theta_1,\theta_2,\theta_3)\mapsto (\sin\theta_1,\cos\theta_1,\sin\theta_2,\cos\theta_2,\sin\theta_3,\cos\theta_3).
\end{equation*}

Results for 20,000 randomly sampled points are depicted in Figure \ref{fig:Lie group 6d}. CA-PCA fails to converge to the true dimension of 6, although it remains close even for small values of $k$ and outperforms PCA at every value of $k$ while competing with LB.

In Figure \ref{fig:higher-d}, we have plotted the average estimated dimension over a large number of points. Since there is plenty of ``wiggle room'' with the dimension and codimension so high, one may ask if the estimators consistently approximate the true dimension, or if they do so in the average only after providing alternating over- and underestimates. Figure \ref{fig:higher-d std dev} shows the standard deviation of the estimated dimensions for each test at fixed values of $k$. Since these are particularly small for both $SO(5)$ and $SO(3)\oplus \mathbb{T}^3$, the estimators, in particular PCA and CA-PCA, have consistently approximated the true ID in these cases.

While we do not include equivalents of Figure \ref{fig:higher-d std dev} for all our other experiments for the sake of brevity, we did construct them. In every case we examined, we observed a similar pattern where the standard deviation of each estimator was roughly proportional to the difference between the average estimated dimension and the actual dimension. It was also generally smaller for PCA and CA-PCA than for LB.

\begin{figure}[hbt!]
    \centering
    \begin{subfigure}[t]{0.45\columnwidth}
         \centering
         \includegraphics[width=\columnwidth]{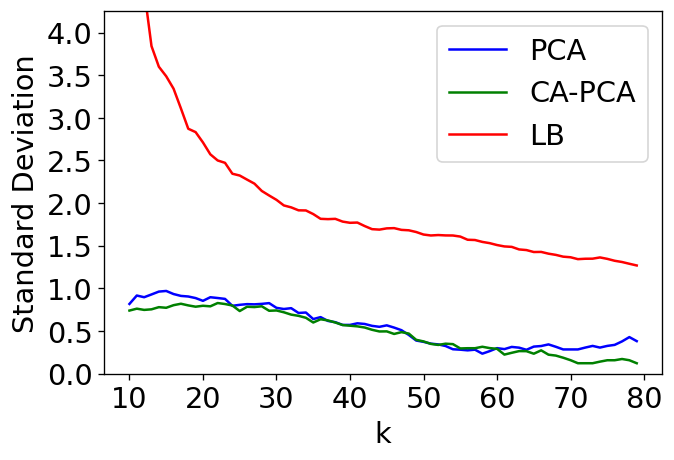}
         \caption{Standard deviation of estimated ID for $SO(5)$}
         \label{fig:SO(5) std dev}
     \end{subfigure}
     \hfill
     \begin{subfigure}[t]{0.45\columnwidth}
         \centering
         \includegraphics[width=\columnwidth]{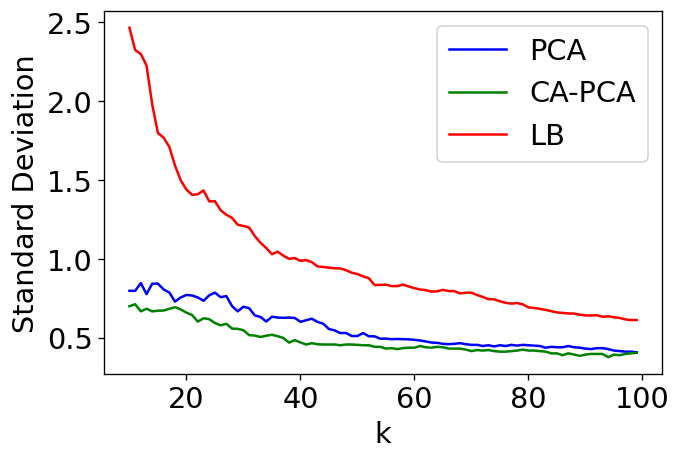}
         \caption{Standard deviation of estimated ID for a 6-dimensional Lie group}
         \label{fig:Lie group 6d std dev}
     \end{subfigure}
     \caption{Standard Deviation of Estimates for Synthetic Manifolds in Higher Dimensions}
        \label{fig:higher-d std dev}
\end{figure}

Figure \ref{fig:higher-d ou} shows for each of the three methods the proportions of estimates which were equal to the correct dimension, over it, or under it. For LB, which does not usually output an integer, we rounded the estimate to the nearest integer before checking if it hit the correct value. The results are consistent with what one might expect given the mean and standard deviation results. PCA and CA-PCA correctly estimate the dimension more as $k$ increases, underestimating it less frequently. At all values of $k$, they are very unlikely to overestimate the dimension. LB, on the other hand, tends to both over- and under-estimate the dimension for all values of $k$ and never gets the correct dimension at more than a 60\% rate.

\begin{figure}[hbt!]
    \centering
    \begin{subfigure}[t]{0.45\columnwidth}
         \centering
         \includegraphics[width=\columnwidth]{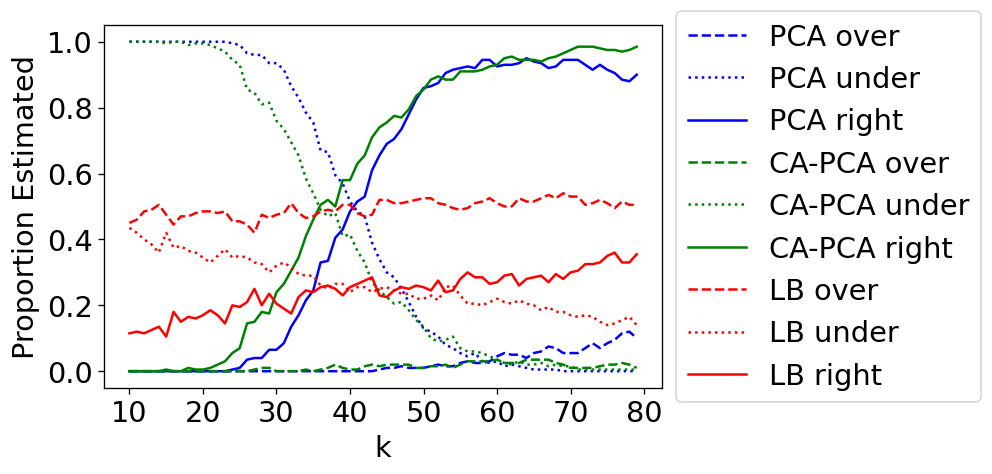}
         \caption{Proportion of Estimated Dimensions Correct/Over/Under for $SO(5)$}
         \label{fig:SO(5) ou}
     \end{subfigure}
     \hfill
     \begin{subfigure}[t]{0.45\columnwidth}
         \centering
         \includegraphics[width=\columnwidth]{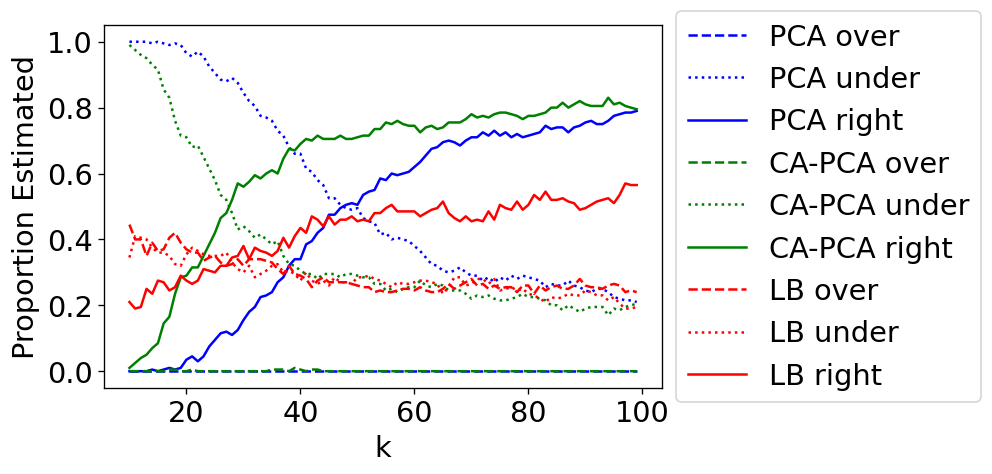}
         \caption{Proportion of Estimated Dimensions Correct/Over/Under for a 6-dimensional Lie group}
         \label{fig:Lie group 6d ou}
     \end{subfigure}
     \caption{Standard Deviation of Estimates for Synthetic Manifolds in Higher Dimensions}
        \label{fig:higher-d ou}
\end{figure}

\subsection{Some Simulated Data}\label{subsec:simulated}

An .stl file (3D image) of an airplane was rotated at a random angle around the $z$-axis before being projected onto the $x$-$z$ plane to determine a two-dimensional ``photograph'' of the airplane. Repeating this process, we obtained 200 images of size 432 by 288 pixels, viewed as vectors in $\R^{432\times288}\cong \R^{124,416}$. (A similar approach was taken in  \cite{Yariv}.) The expected dimension of this point cloud is 1, given that the images were generated via a 1-dimensional group of symmetries. Results are depicted in Figure \ref{fig:1D airplane}.

\begin{figure}[hbt!]
    \centering
    \begin{subfigure}[b]{0.3\columnwidth}
         \centering
         \includegraphics[width=\columnwidth]{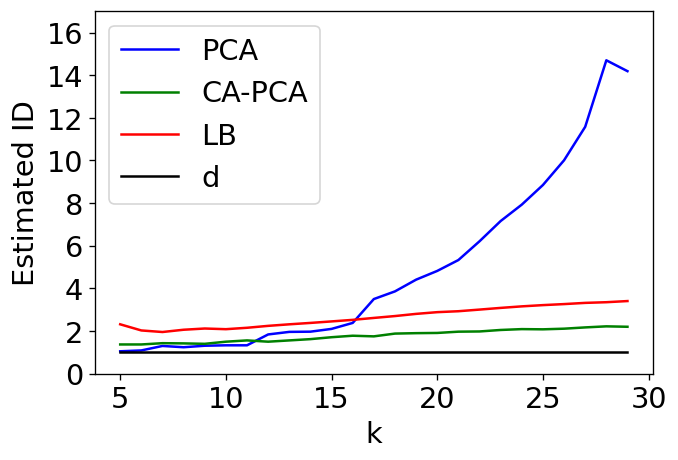}
         \caption{Estimated ID for 1-D airplane photos, $N=100$}
         \label{fig:1D airplane}
     \end{subfigure}
     \hfill
     \begin{subfigure}[b]{0.3\columnwidth}
         \centering
         \includegraphics[width=\columnwidth]{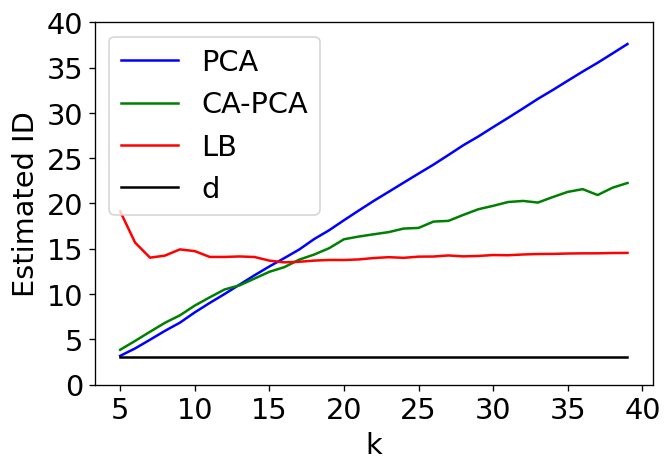}
         \caption{Estimated ID for 3-D airplane photos, $N=100$}
         \label{fig:3D airplane}
     \end{subfigure}
     \hfill
     \begin{subfigure}[b]{0.3\columnwidth}
         \centering
         \includegraphics[width=\columnwidth]{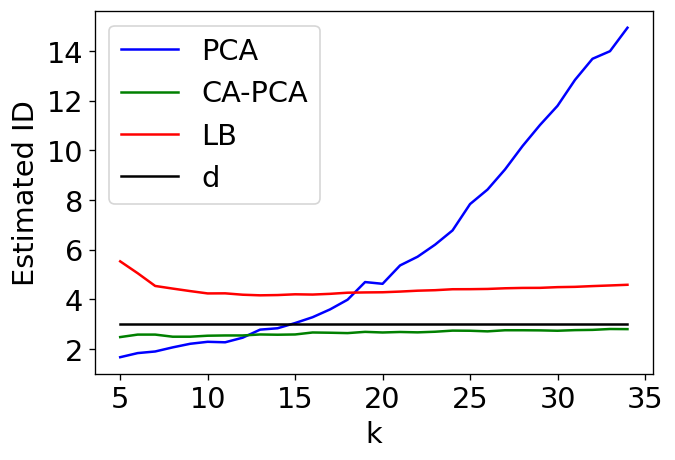}
         \caption{Estimated ID for Isomap faces}
         \label{fig:isomap faces}
     \end{subfigure}
        \caption{Results for Simulated Data}
        \label{fig:simulated}
\end{figure}

As one can see, both PCA and CA-PCA provide close to the ``correct'' answer for small values of $k$; however, for $k>13$, the estimated dimension for PCA blows up while for CA-PCA it remains between 1 and approximately 2. Considering only PCA, one may suspect that the point cloud does not lie on or near a manifold, or consider an estimated dimension so far from the true value that the estimator does not provide any benefits. In contrast, with CA-PCA the results are certainly consistent with data coming from a manifold. While the estimated dimension may be 2, a small amount of error assumed by the user might give them the range of 1 to 3, which may be sufficient.

We obtained 10,000 images of an airplane by applying a random element of $SO(3)$ (uniformly with respect to Haar measure) before projecting into two dimensions. The results of the ID estimators are shown in Figure \ref{fig:3D airplane}. Here, it is clear that CA-PCA is not always able to magically estimate the dimension in such cases; however, it does a better job of indicating to the user that there may be some underlying structure to the data than PCA does.

The Isomap face database consists of 698 greyscale images of size 64 by 64 pixels and has been used in \cite{levina2004maximum,lin2008riemannian,tenenbaum2000global} and many other works. The images are of an artificial face under varying illumination, vertical orientation, and horizontal orientation. Translating the images into vectors in $\R^{64\times64}\cong\R^{4096}$, we obtain the results in Figure \ref{fig:isomap faces}.

One may expect the true ID to be 3, due to the three parameters which vary in the construction of the photos. However, each of those three parameters has a clear upper and lower bound. For instance, there are photos with the head facing left, right, and center, but none with it facing backwards. By similar reasoning regarding the vertical orientation and brightness, we expect the Isomap faces to have similar structure as a three-dimensional unit cube.

The problem of determining the dimension of a manifold with boundary presents challenges distinct from those arising in dealing with manifolds without boundary. See \cite{berry2017density,carter2007biasing} for more on this problem.

For comparison, we construct a point cloud through random selection of 698 points from the unit cube in $\R^3$ and run the same tests, producing Figure \ref{fig:unit cube}. Here, PCA and LB predict a dimension a little under 3, while CA-PCA predicts almost exactly 3. 

\begin{figure}[hbt!]
    \centering
    \begin{subfigure}[b]{0.3\columnwidth}
         \centering
         \includegraphics[width=\columnwidth]{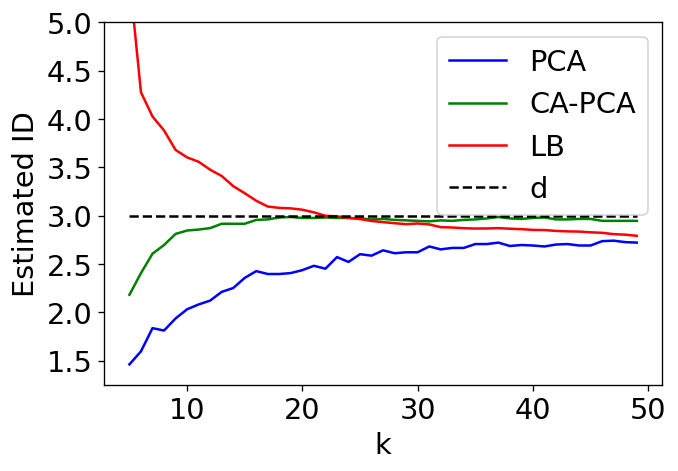}
         \caption{Estimated ID for unit cube in $\R^3$}
         \label{fig:unit cube}
     \end{subfigure}
     \hfill\begin{subfigure}[b]{0.3\columnwidth}
         \centering
         \includegraphics[width=\columnwidth]{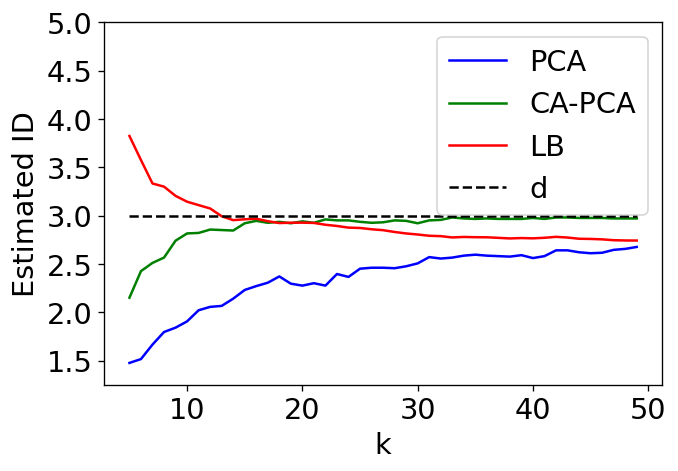}
         \caption{Estimated ID for a rectangular prism}
         \label{fig:prism}
     \end{subfigure}
     \hfill
     \begin{subfigure}[b]{0.3\columnwidth}
         \centering
         \includegraphics[width=\columnwidth]{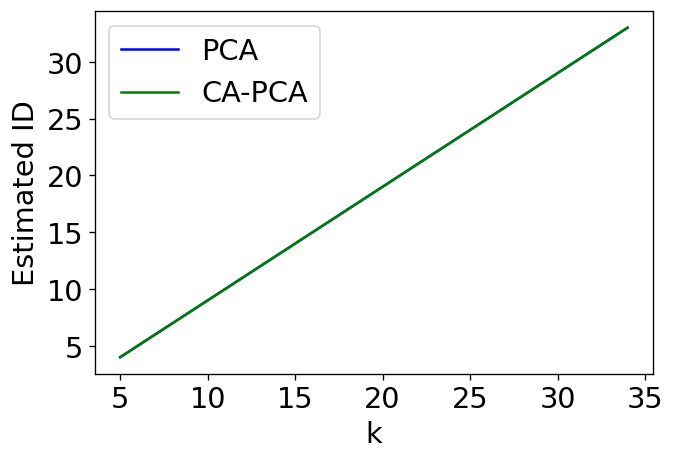}
         \caption{Estimated ID for random images in $\R^{64\times64}$}
         \label{fig:698 random}
     \end{subfigure}
     \caption{For Comparison with Isomap Faces}
        \label{fig:boundary}
\end{figure}

While the variations in the vertical and horizontal orientations of the face may be comparable in magnitude, it is unclear whether the variation in the brightness is the same. For this reason, we repeat the above with the unit cube replaced by $[0,1]\times[0,1]\times[0,5]$, with results in Figure \ref{fig:prism}. Again, we achieve similar results as in Figure \ref{fig:unit cube}

We note that CA-PCA does tend to get much closer to the `true' dimension of 3, if one chooses to assign this value to a manifold with boundary. It is possible this is no coincidence; that given a sampled point at the center of a face of the unit cube, CA-PCA is better able to determine that the third and lowest eigenvalue is \textit{not} due to curvature since this would require the first two eigenvalues to be smaller than observed. However, more investigation is needed to determine if CA-PCA truly analyzes manifolds with boundary differently.

Given the flexibility CA-PCA has to ``move tail eigenvalues over to higher ones,'' one may wonder if it is better able to estimate the dimension of the Isomap faces (and other sets of images) by consistent underestimation in the setting $k<<D$. To address this potential issue, we generated 698 random 64x64 grey-scale images (elements of the unit cube in $\R^{4096}$) and applied the three ID estimators. Results for PCA and CA-PCA are shown in Figure \ref{fig:698 random}. (The LB estimator output values over 300 so its results are not depicted for readability of the graph.) Here, CA-PCA provides the same estimates as PCA, suggesting its performance on the Isomap faces was truly due to proper estimation of the dimension rather than consistent underestimation.

\subsection{Analysis of Non-Manifolds}

Next, we see what happens when we apply the tests to another object which is not a manifold, in this case the union of two manifolds of different dimensions. We sample 200 points from the unit sphere in $\R^4$ and 200 points from the circle determined by the equations $(x_1-4)^2+x_2^2=16, x_3=0, x_4=0$.

Running tests on the union of the two datasets gives Figure \ref{fig:S1 U S3}. Considering the average, it appears the estimators tend to a value of 2, the average of the dimension of the 3-sphere and that of the circle. However, the standard deviation of the estimates shown in Figure \ref{fig:S1 U S3 std dev} demonstrates that there are many occurrences of 1's and 3's. This makes sense, given that base points sampled from the sphere (and a reasonable distance from the circle) would only have nearest neighbors also coming from the sphere, leading to a dimension estimate of 3. Likewise, points sampled from the circle (and reasonably far from the sphere) would lead to a dimension estimate of 1. For comparison, consider the standard deviations for the Klein bottle, shown in Figure \ref{fig:Klein std dev}.

\begin{figure}[hbt!]
    \centering
    \begin{subfigure}[t]{0.3\columnwidth}
         \centering
         \includegraphics[width=\columnwidth]{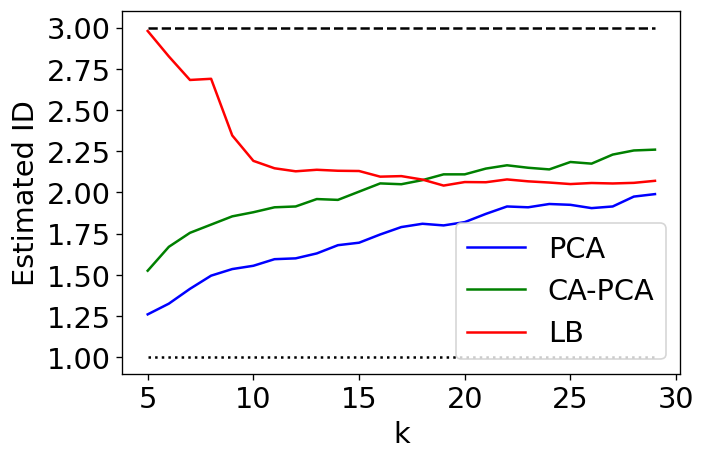}
         \caption{Estimated ID for union of circle and $S^3$ with dimensions of circle (1) and $S^3$ (3)}
         \label{fig:S1 U S3}
     \end{subfigure}
     \hfill
     \begin{subfigure}[t]{0.3\columnwidth}
         \centering
         \includegraphics[width=\columnwidth]{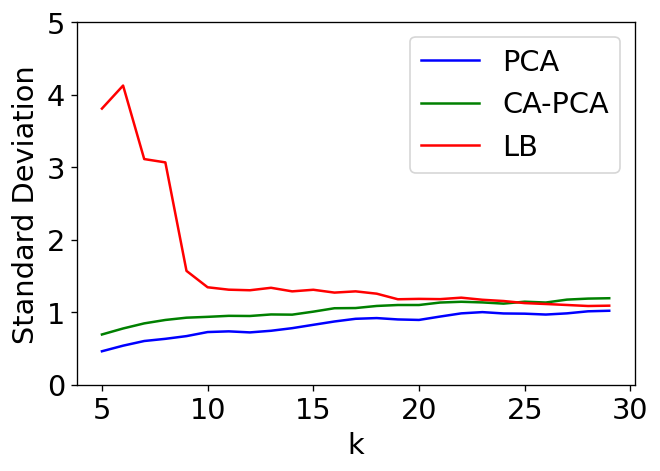}
         \caption{Standard deviation of estimated ID for the union of circle and $S^3$}
         \label{fig:S1 U S3 std dev}
     \end{subfigure}
     \hfill
    \begin{subfigure}[t]{0.3\columnwidth}
         \centering
         \includegraphics[width=\columnwidth]{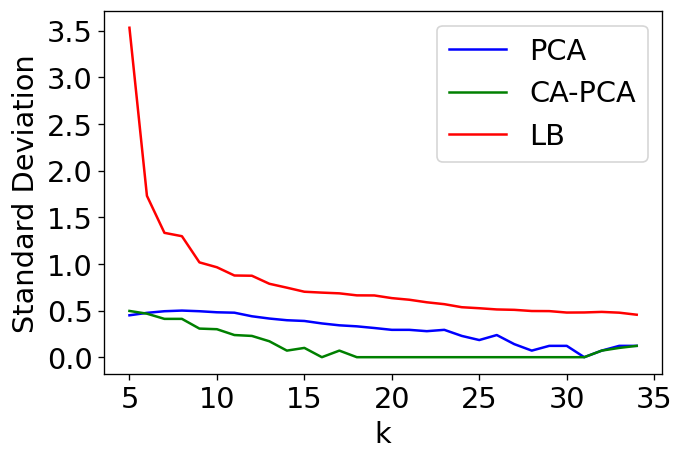}
         \caption{Standard deviation of estimated ID for Klein bottle}
         \label{fig:Klein std dev}
     \end{subfigure}
        \caption{Analysis of Non-Manifolds}
        \label{fig:non-manifold}
\end{figure}

Thus, CA-PCA, plus PCA and perhaps LB, may potentially assist in determining if data comes from a manifold.

\subsection{Robustness to Error}

Lastly, we test our estimator for robustness by introducing error into the samples. Error was added to each point by independently adding to each coordinate a random number sampled uniformly from the interval $(-\epsilon,\epsilon)$ for some choice of $\epsilon$. Figure \ref{fig:Klein error} shows the results of the tests when this error was introduced to 400 points on the Klein bottle with $\epsilon=1$ (compare the results to those in Figure \ref{fig:Klein} and $\epsilon$ to the choice of $a=10,b=5$).

The same method was applied to 20,000 points from $SO(5)$ with $\epsilon=.1$, with the results depicted in Figure \ref{fig:SO(5) error}. In both cases of error, CA-PCA's estimates are slightly further away from the true ID, yet still close and competitive with those from PCA. (Compare the results to those of Figure \ref{fig:SO(5)}.) This comes as no surprise for small levels of error, given that they lead to small changes in the eigenvalues computed from principal component analysis, which should lead to only the occasional change in the minimization problems \eqref{eq:PCA_method} and \eqref{eq:error to be minimized}.

Lastly, we run the three methods on the Klein bottle, fixing the value of $k=20$ and vary $\epsilon$ from 0 to 1.4 with step sizes of 0.1. Similar to before, we sample $n=400$ points from the Klein bottle and run 200 trials per instance of $\epsilon$. Results are displayed in Figure \ref{fig:wasup}. As one can see, all methods are (at least somewhat) robust up to a certain level of error. Once the error increases to a certain level, the thickened Klein bottle in which our sample (with error) lies may be considered a 4-dimensional manifold. In particular, the ball containing the nearest neighbors of the base point may be fully contained in this 4-dimensional manifold, seeing points spread equally in all directions.

\begin{figure}[hbt!]
    \centering
    \begin{subfigure}[b]{0.3\columnwidth}
         \centering
         \includegraphics[width=\columnwidth]{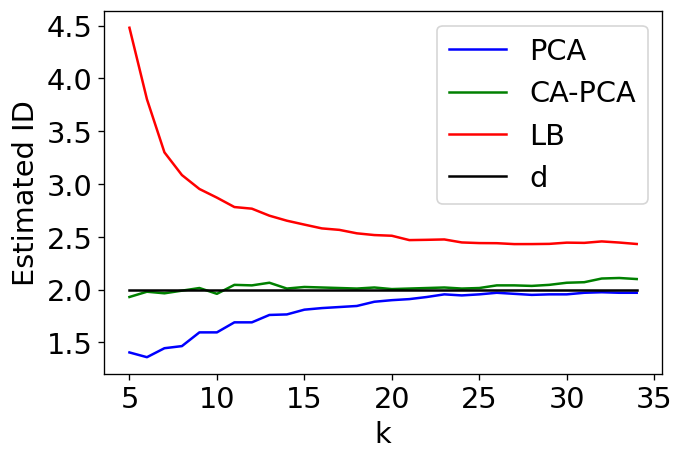}
         \caption{Estimated ID for Klein bottle with error, varying $k$}
         \label{fig:Klein error}
     \end{subfigure}
     \hfill
     \begin{subfigure}[b]{0.3\columnwidth}
         \centering
         \includegraphics[width=\columnwidth]{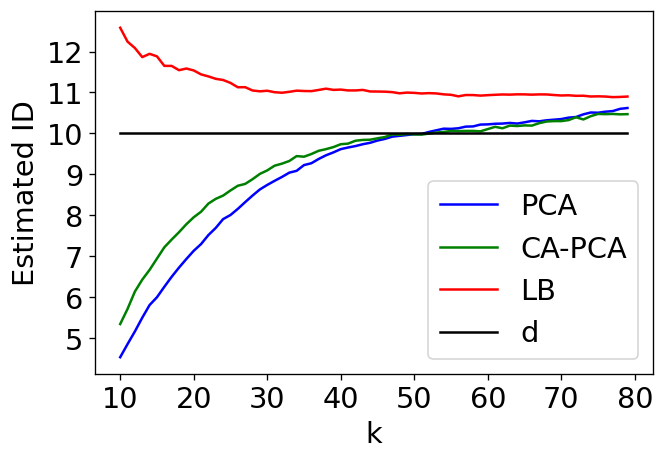}
         \caption{Estimated ID for $SO(5)$ with error}
         \label{fig:SO(5) error}
     \end{subfigure}
    \hfill
    \begin{subfigure}[b]{0.3\columnwidth}
         \centering
         \includegraphics[width=\columnwidth]{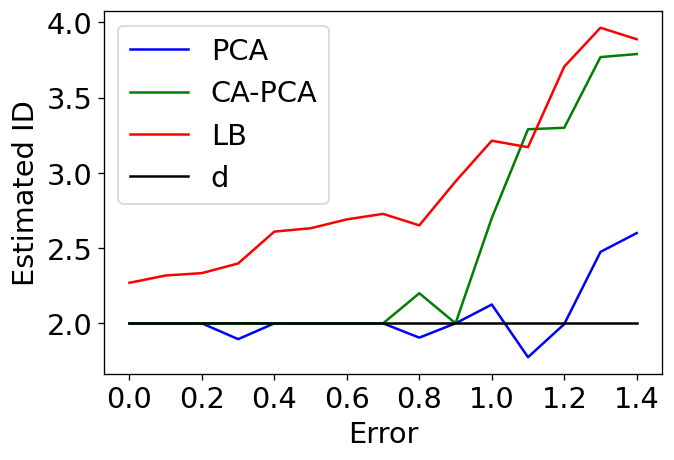}
         \caption{Estimated ID for Klein bottle with error, varying error}
         \label{fig:wasup}
     \end{subfigure}
     \caption{Synthetic Manifolds with Error}
     \label{fig:synthetic error}
\end{figure}

\subsection{Varying Curvature and Sample Size}

To better demonstrate the relevance of curvature in our comparison between CA-PCA and PCA, we run the three methods on the surface parameterized by
\begin{equation}\label{eq:fancy_surface}
    (u,v)\mapsto (u,v,a(u^2+v^2),a(u^2-v^2)), 0\le u,v\le 1,
\end{equation}
varying $a$ from 0 to 7.25 with step size 0.25 as a means of increasing the curvature. We sample 400 points at random from this surface with respect to the uniform measure on $(u,v)\in[-1,1]^2$. While this is not exactly the uniform measure on the underlying surface, we again appeal to the smooth density.

Next, we take the $k=15$ nearest neighbors of the origin in $\R^4$ for each of the three tests. This serves two purposes. First, it avoids any issues of the boundary of the manifold. Second, it ensures that the parameter $a$ reflects the magnitude of the curvature \textit{at the sampled points}. Since each trial uses the same $k$ points, we resample 400 points from the surface for each of $N=200$ trials. We then repeat this process for each value of $a$.

Results are plotted in Figure \ref{fig:curvature}. Both PCA and CA-PCA remain roughly constant for $0\le a\le 4$, though PCA begins to increase around $a=4$ and severely overestimates the dimension around $5\le a\le 7$. CA-PCA, on the other hand, remains more steady before beginning to underestimate the dimension around $a=6$ (though by not as much). This may be due to violation of our small curvature assumption. Interestingly enough, CA-PCA seems to do better than PCA even in the case of $a=0$ when the surface is a flat vector subspace. More investigation is needed to determine the cause of this phenomenon.

% we rerun our experiment with the curve paramterized in \eqref{eq:fancy_curve}, generalizing the parameterization to
% \begin{equation}\label{eq:hella_fancy_curve}
% \theta\mapsto(\cos\theta,\sin\theta,\cos[(1+a)\theta],...,\cos[(1+3a)\theta],\sin[(1+3a)\theta]), \theta\in[0,2\pi),
% \end{equation}
 % Here, we sample $n=200$ points from the curve, fix $k=30$, and run 200 trials per value of $a$. Results are depicted in Figure \ref{fig:curvature}. Here, all three methods get worse as the curvature increases to extreme levels, yet CA-PCA is able to keep a lead over PCA.

\begin{figure}[hbt!]
    \centering
    \begin{subfigure}[t]{0.45\columnwidth}
         \centering
         \includegraphics[width=\columnwidth]{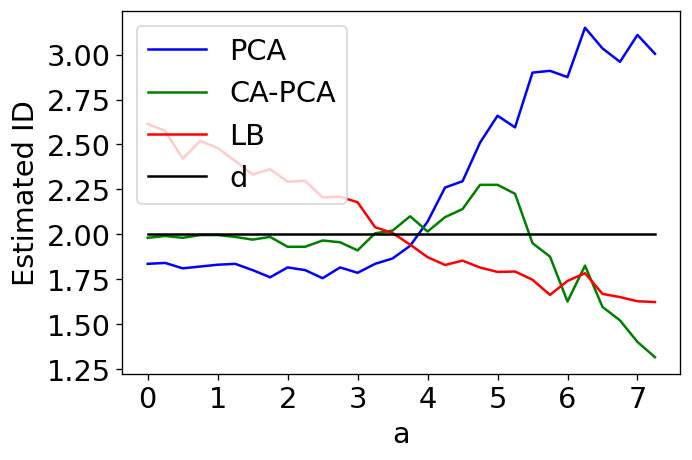}
         \caption{Estimated ID for surface in \eqref{eq:fancy_surface}, $a$ varying}
         \label{fig:curvature}
     \end{subfigure}
     \hfill
     \begin{subfigure}[t]{0.45\columnwidth}
         \centering
         \includegraphics[width=\columnwidth]{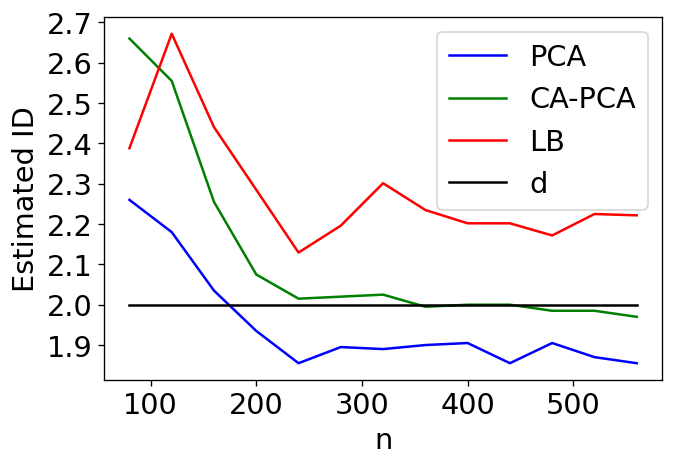}
         \caption{Estimated ID for the Klein bottle, $n$ varying}
         \label{fig:sample_size}
     \end{subfigure}
     \caption{Varying Parameters}
        \label{fig:vary}
\end{figure}

Lastly, we investigate the effect of the number of points $n$ sampled from the manifold on the methods. While this is not a direct input of Algorithm \ref{alg:CAPCA}, it is still something which the user may have some choice over. It may be necessary to balance the success of our method with the (computational) cost of the nearest neighbors method and the (computational and/or physical) cost of obtaining more data.

Furthermore, varying $n$ amounts to varying the effects of curvature in the following sense. Suppose $k$ is fixed. Then, if $n$ is significantly decreased, one must go further ``around the manifold'' from a point to obtain its $k$ nearest neighbors, putting one at greater risks for high levels of nonlinearity.

In testing, we sampled $n$ points from the Klein bottle, $n$ varying from 80 to 560, and ran the three methods $N=200$ times on each sample with a value of $k=20$. At the lowest values of $n$, when the 20 nearest neighbors are likely to wrap furthest around the Klein bottle, CA-PCA actually struggles more than PCA. This may be due to a violation of our small curvature assumption. However, as soon as 20 points is a mere 10\% of the sample ($n=200$), CA-PCA does about as well as one could expect and begins to outperform PCA again.

\section{Conclusion}\label{sec:conclusion}

As we have seen, the use of local PCA as an ID estimator may be improved in a variety of settings by taking into account the curvature of the underlying manifold while generally providing good estimates and competing with LB.

CA-PCA is merely one of many potential applications of curvature analysis to ID estimators. In particular, calculations in the Appendix (\eqref{eq:surface_area_calc} with variable radius) show that the $d$-dimensional volume of a ball of radius $r$ intersected with a $d$-dimensional manifold $\mathcal{M}$ is proportional to $r^d+cr^{d+2}$, where $c$ is a constant depending on $\mathcal{M}$ (see Subsection \ref{subsec:volume growth}). There are many ID estimators derived using a volume proportional to $r^d$ or related notions like average distances from the center of a ball. It would be interesting to see if any of these tests could be improved by taking curvature into account.

Another place for potential further study is that of ID estimation for manifolds with boundary. In our experiments, CA-PCA gets closer to the ``true'' ID of manifolds with boundary than PCA or LB, though more analysis is needed to determine if this is more than a coincidence.

Lastly, we hope that future work will lead to a process for determining a single value of $k$, or at least a few values of $k$ from which one may confidently estimate the dimension. Our work suggests that for manifolds of low dimension, choosing $10\le k\le 25$ in CA-PCA may suffice, while manifolds of greater dimension may require $30\le k\le 50$. Choosing even larger $k$ does not seem to hurt, except in the case of the curve \eqref{eq:fancy_curve} and the airplane photos, and both ranges seem lower than the corresponding ranges for PCA. However, more study and perhaps more specifically designed experiments may be required.

%%%%%%%%%%%%%%%%%%%%%%%%%%%%%%%%%%%%%%%%%%%%%%%%%%%%%%%%%%%%%%%%%%%%%%%%%%%%%%%
%%%%%%%%%%%%%%%%%%%%%%%%%%%%%%%%%%%%%%%%%%%%%%%%%%%%%%%%%%%%%%%%%%%%%%%%%%%%%%%
% APPENDIX
%%%%%%%%%%%%%%%%%%%%%%%%%%%%%%%%%%%%%%%%%%%%%%%%%%%%%%%%%%%%%%%%%%%%%%%%%%%%%%%
%%%%%%%%%%%%%%%%%%%%%%%%%%%%%%%%%%%%%%%%%%%%%%%%%%%%%%%%%%%%%%%%%%%%%%%%%%%%%%%
\newpage
\appendix
\onecolumn

% \section{Additional Experiments}\label{sec:add exp}

% \subsection{Klein Bottle}\label{subsec: Klein}

% \subsection{More Airplane Photos}\label{subsec:more plane}

% \subsection{Rectangular Prism}\label{subsec: rect prism}

% In figure \ref{fig:prism}, we present the results of the ID estimators for the rectangular prism $[0,1]\times[0,1]\times[0,5]$ mentioned in Subsection \ref{subsec:simulated}.

\section{Integral Computations}\label{sec:appendix}

\subsection{Some Elementary Computations}

Let $\sigma(x)$ denote the (non-normalized) surface measure on the sphere induced by Lebesgue measure on $\R^d$. We will often write $\sigma(x)$ when $x\in \R^d$ and $\sigma(\theta)$ when $\theta\in S^{d-1}$, the unit sphere in $\R^d$.

The following elementary fact can be found in \cite{Folland}, for instance.
\begin{theorem}\label{thm:sphere integrals}
Let $P(x)=x_1^{\alpha_1}\cdots x_d^{\alpha_d}$ for $\alpha_1,...,\alpha_d\in\{0,1,2,...\}$. Write $\beta_j=\frac{1}{2}(\alpha_j+1)$. Then, if all $\alpha_j$ are even,
\begin{equation*}
    \int_{S^{d-1}} P(x)d\sigma(x)=\frac{2\Gamma(\beta_1)\cdots\Gamma(\beta_d)}{\Gamma(\beta_1+...+\beta_d)}.
\end{equation*}

If any $\alpha_j$ are odd, then the above integral is zero.
\end{theorem}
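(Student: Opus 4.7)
The plan is to treat the two cases separately: the vanishing case by a parity argument and the nonvanishing case by the classical Gaussian-integral trick.

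For the vanishing case, suppose some $\alpha_j$ is odd. Then $P(x)$ is odd under the reflection $x_j \mapsto -x_j$. Since $S^{d-1}$ and the surface measure $d\sigma$ are both invariant under this reflection, the integral of $P$ over $S^{d-1}$ equals its own negative and so vanishes. No computation is needed.

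For the case when all $\alpha_j$ are even, the idea is to evaluate the auxiliary Gaussian-weighted integral
\[ I := \int_{\R^d} P(x)\, e^{-|x|^2}\, dx \]
in two different ways and equate the results. On one hand, since $e^{-|x|^2} = \prod_j e^{-x_j^2}$ and $P$ is a monomial, Fubini factors $I$ into a product of one-dimensional integrals. Each factor $\int_{-\infty}^\infty x_j^{\alpha_j}\, e^{-x_j^2}\, dx_j$, with $\alpha_j$ even, is a standard Gamma integral that, via the substitution $u = x_j^2$ together with evenness, reduces to $\Gamma(\beta_j)$. Hence $I = \prod_{j=1}^d \Gamma(\beta_j)$.

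On the other hand, polar coordinates together with the homogeneity of $P$ of degree $|\alpha| := \sum_j \alpha_j$ give
\[ I = \Bigl(\int_0^\infty r^{|\alpha| + d - 1} e^{-r^2}\, dr\Bigr)\Bigl(\int_{S^{d-1}} P(\theta)\, d\sigma(\theta)\Bigr). \]
The substitution $t = r^2$ collapses the radial integral to $\tfrac{1}{2}\,\Gamma\bigl(\tfrac{|\alpha|+d}{2}\bigr)$, and the elementary identity $\sum_j \beta_j = (|\alpha|+d)/2$ rewrites this factor as $\tfrac{1}{2}\,\Gamma(\beta_1 + \cdots + \beta_d)$. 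Equating the two expressions for $I$ and solving for $\int_{S^{d-1}} P\, d\sigma$ yields the claimed formula. There is no real obstacle: the argument is entirely classical, and the only bookkeeping is the radial substitution and the check $\sum_j \beta_j = (|\alpha|+d)/2$.
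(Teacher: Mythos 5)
Your proof is correct, and it is the standard Gaussian-integral argument (evaluate $\int_{\R^d} P(x)e^{-|x|^2}\,dx$ once by Fubini and once in polar coordinates, then solve for the spherical integral), which is precisely the approach of the reference \cite{Folland} that the paper cites rather than proving the theorem itself. The parity argument for the odd case and the bookkeeping identity $\sum_j\beta_j=(|\alpha|+d)/2$ are both handled correctly.
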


For our purposes, we will need the values $\Gamma(1/2)=\sqrt{\pi}, \Gamma(3/2)=\frac{\sqrt{\pi}}{2},\Gamma(5/2)=\frac{3\sqrt{\pi}}{4},$ and $\Gamma(7/2)=\frac{15\sqrt{\pi}}{8}$ and the identity $\Gamma(x+1)=x\Gamma(x)$.

As particular instances of Theorem \ref{thm:sphere integrals},
\begin{equation}\label{eq:sphere 0}
    \int_{S^{d-1}}d\sigma(x)=\frac{2\pi^{d/2}}{\Gamma(d/2)}
\end{equation}

\begin{equation}\label{eq:sphere 2}
    \int_{S^{d-1}}x_i^2d\sigma(x)=\frac{\pi^{d/2}}{\Gamma(d/2+1)}
\end{equation}

\begin{equation}\label{eq:sphere 2-2}
    \int_{S^{d-1}}x_i^2x_j^2d\sigma(x)=\frac{\pi^{d/2}}{2\Gamma(d/2+2)}
\end{equation}

\begin{equation}\label{eq:sphere 4}
    \int_{S^{d-1}}x_i^4d\sigma(x)=\frac{3\pi^{d/2}}{2\Gamma(d/2+2)}
\end{equation}

\begin{equation}\label{eq:sphere 2-2-2}
    \int_{S^{d-1}}x_i^2x_j^2x_k^2d\sigma(x)=\frac{\pi^{d/2}}{4\Gamma(d/2+3)}
\end{equation}

\begin{equation}\label{eq:sphere 4-2}
    \int_{S^{d-1}}x_i^4x_j^2d\sigma(x)=\frac{3\pi^{d/2}}{4\Gamma(d/2+3)}
\end{equation}

\begin{equation}\label{eq:sphere 6}
    \int_{S^{d-1}}x_i^6d\sigma(x)=\frac{15\pi^{d/2}}{4\Gamma(d/2+3)}
\end{equation}

\begin{lemma}\label{lemma:sphere quad}
Let $Q:\R^d\to\R$ be a quadratic form with eigenvalues $\mu_1,...,\mu_d$. Then,
\begin{equation}\label{eq:sphere quad}
     \int_{S^{d-1}}Q(\theta)d\theta=\frac{\pi^{d/2}}{\Gamma(d/2+1)}\sum_{k}\mu_{k}
\end{equation}

Also,
\begin{equation}\label{eq:sphere quad^2}
    \int_{S^{d-1}}Q(\theta)^2d\theta=\left(\frac{1}{2}\left(\sum_{k=1}^d\mu_{k}\right)^2+\sum_{k=1}^d\mu_{k}^2\right)\frac{\pi^{d/2}}{\Gamma(d/2+2)}.
\end{equation}

\end{lemma}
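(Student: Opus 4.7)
My plan is to reduce both identities to the elementary monomial integrals already recorded in equations \eqref{eq:sphere 2}, \eqref{eq:sphere 2-2}, and \eqref{eq:sphere 4} by diagonalizing $Q$ via the spectral theorem and exploiting the rotational invariance of the surface measure $\sigma$ on $S^{d-1}$.

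First I would write $Q(x)=x^T A x$ for the symmetric matrix $A$ associated with the form, and choose an orthogonal $U$ such that $U^T A U = \operatorname{diag}(\mu_1,\dots,\mu_d)$. Since the change of variables $\theta\mapsto U\theta$ is a rotation, it preserves both $S^{d-1}$ and $\sigma$, so in computing either integral I may assume from the start that $Q(\theta)=\sum_{k=1}^d \mu_k \theta_k^2$. With this normalization, the first identity is immediate from linearity: $\int_{S^{d-1}}Q(\theta)\,d\sigma(\theta)=\sum_{k}\mu_k\int_{S^{d-1}}\theta_k^2\,d\sigma(\theta)$, and applying \eqref{eq:sphere 2} produces the claimed factor $\pi^{d/2}/\Gamma(d/2+1)$.

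For the second identity I would expand the square as
\begin{equation*}
Q(\theta)^2=\sum_{k=1}^d \mu_k^2 \theta_k^4 + \sum_{k\neq l}\mu_k\mu_l\,\theta_k^2\theta_l^2
\end{equation*}
and plug in \eqref{eq:sphere 4} and \eqref{eq:sphere 2-2} to obtain
\begin{equation*}
\int_{S^{d-1}}Q(\theta)^2\,d\sigma(\theta)=\frac{\pi^{d/2}}{2\Gamma(d/2+2)}\!\left(3\sum_k \mu_k^2+\sum_{k\neq l}\mu_k\mu_l\right).
\end{equation*}
The bracketed sum equals $2\sum_k \mu_k^2+\left(\sum_k \mu_k\right)^2$, which after dividing by $2$ matches the stated expression $\tfrac{1}{2}(\sum_k \mu_k)^2+\sum_k \mu_k^2$ times $\pi^{d/2}/\Gamma(d/2+2)$.

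There is no real obstacle here; the only subtlety worth flagging carefully in the write-up is the justification that the diagonalization step loses no generality, i.e.\ that $\sigma$ is invariant under orthogonal transformations, and bookkeeping the $\mu_k^2$ versus $\mu_k\mu_l$ contributions when separating the diagonal ($k=l$) and off-diagonal ($k\neq l$) terms of the expansion of $Q(\theta)^2$.
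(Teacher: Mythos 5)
Your proof is correct and follows essentially the same approach as the paper: rotate coordinates to diagonalize $Q$ using the rotation-invariance of surface measure, then apply the monomial integrals \eqref{eq:sphere 2}, \eqref{eq:sphere 2-2}, and \eqref{eq:sphere 4} and collect the $k=l$ and $k\neq l$ terms. Your algebraic regrouping of $3\sum_k\mu_k^2+\sum_{k\neq l}\mu_k\mu_l$ into $2\sum_k\mu_k^2+\bigl(\sum_k\mu_k\bigr)^2$ is exactly what the paper does.
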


\begin{proof}
To establish \eqref{eq:sphere quad}, we rotate coordinates so the eigenvectors of $Q$ are $e_1,...,e_d$ and use \eqref{eq:sphere 2}:
\begin{align*}
    \int_{S^{d-1}}{Q}(\theta)d\theta&=\int_{S^{d-1}}\sum_{k=1}^d\mu_{k}x_k^2d\theta\\
    &=\sum_{k=1}^d\mu_{k}\int_{S^{d-1}}x_k^2d\theta\\
    &=\frac{\sqrt{\pi}^d}{\Gamma(d/2+1)}\sum_{k=1}^d\mu_{k}.
\end{align*}

For \eqref{eq:sphere quad^2}, we repeat the same rotation of coordinates and use \eqref{eq:sphere 2-2} and \eqref{eq:sphere 4}:
\begin{align*}
    \int_{S^{d-1}}Q(\theta)^2d\theta&=\int_{S^{d-1}}\left(\sum_{k=1}^d\mu_{k}x_k^2\right)^2 d\theta\\
    &=\sum_{1\le k,l\le d}\mu_{k}\mu_{l}\int_{S^{d-1}}x_k^2x_l^2d\theta\\
    &=\sum_{k=1}^d\mu_{k}^2\left[\frac{3}{2}\frac{\sqrt{\pi}^d}{\Gamma(d/2+2)}\right]+\sum_{k\ne l}\mu_{k}\mu_{l}\left[\frac{1}{2}\frac{\sqrt{\pi}^d}{\Gamma(d/2+2)}\right]\\
    &=\left(\frac{1}{2}(\sum_k\mu_{k})^2+\sum_k\mu_
    {k}^2\right)\frac{\sqrt{\pi}^d}{\Gamma(d/2+2)}
\end{align*}

\end{proof}

\subsection{The Region}

First, let
\begin{equation*}
    q(x)=\sum_{j=1}^{D-d}Q_j(x)^2
\end{equation*}
and note $q(x)=O(\Lambda^2)$ for $|x|\le 1$.

Writing $x=r\theta$, with $r\ge0$ and $\theta\in S^{d-1}$, let $r=r(\theta)$ denote the positive real solution to
\begin{equation*}
    r^2+q(r\theta)=r^2+r^4q(\theta)=1.
\end{equation*}

By the quadratic formula and Taylor expansion $\sqrt{1+t}=1+\frac{t}{2}+O(t^2)$,
\begin{align*}
    r^2(\theta)&=\frac{-1+\sqrt{1+4q(\theta)}}{2q(\theta)}\\
    &=\frac{-1+1+2q(\theta)-2q(\theta)^2+O(q(\theta)^3)}{2q(\theta)}\\
    &=1-q(\theta)+O(\Lambda^4).
\end{align*}

Thus,
\begin{equation}\label{eq:r(theta)}
    r(\theta)=1-\frac{q(\theta)}{2}+O(\Lambda^4).
\end{equation}
and
\begin{equation}\label{eq:r(theta)^m}
    r(\theta)^m=1-\frac{mq(\theta)}{2}+O(\Lambda^4).
\end{equation}

This approximation will be used throughout this section as we integrate over spherical coordinates $(r,\theta)$ and need to find the limits of our integral in $r$.

\subsection{Total Surface Area}\label{subsec:volume growth}

To begin the computation of the volume of $\mathcal{M}\cap B_1(0)$, we switch to polar coordinates, substitute the expression of $S(x)$ in \eqref{eq:form for S(x)}, and integrate with respect to the radius
\begin{align*}
    \int_R S(x) dx&=\int_{S^{d-1}}\int_0^{r(\theta)}S(r\theta)r^{d-1}drd\theta\\
    &=\int_{S^{d-1}}\int_0^{r(\theta)}\left(1+2\sum_{j=1}^{D-d}\tilde{Q}_j(r\theta)\right)r^{d-1}drd\theta +O(\Lambda^4)\\
    &=\int_{S^{d-1}}\int_0^{r(\theta)}r^{d-1}+2r^{d+1}\sum_{j=1}^{D-d}\tilde{Q}_j(\theta)drd\theta\\
    &=\int_{S^{d-1}}\frac{1}{d}r(\theta)^d+\frac{2}{d+2}r(\theta)^{d+2}\sum_{j=1}^{D-d}\tilde{Q}_j(\theta)d\theta.
\end{align*}

By \eqref{eq:r(theta)^m},
\begin{align*}
    \int_R S(x) dx&=\int_{S^{d-1}}\frac{1}{d}(1-\frac{dq(\theta)}{2})+\frac{2}{d+2}(1-\frac{(d+2)q(\theta)}{2})\sum_{j=1}^{D-d}\tilde{Q}_j(\theta)d\theta+O(\Lambda^4)\\
    &=\int_{S^{d-1}}\frac{1}{d}-\frac{q(\theta)}{2}+\frac{2}{d+2}\sum_{j=1}^{D-d}\tilde{Q}_j(\theta)d\theta+O(\Lambda^4)
\end{align*}

By \eqref{eq:sphere 0}, \eqref{eq:sphere quad}, and \eqref{eq:sphere quad^2}, we integrate over $S^{d-1}$ and find
\begin{align}\label{eq:surface_area_calc}
    \int_RS(x)dx=&\frac{2\pi^{d/2}}{d\Gamma(d/2)}-\frac{\pi^{d/2}}{2\Gamma(d/2+2)}\sum_j\left(\frac{1}{2}(\sum_i\lambda_{i,j})^2+
    \sum_i\lambda_{i,j}^2\right)\\&+\frac{2\pi^{d/2}}{(d+2)\Gamma(d/2+1)}\sum_{i,j}\lambda_{i,j}^2+O(\Lambda^4)\\
    =&\frac{\pi^{d/2}}{\Gamma(d/2)}\left[\frac{2}{d}-\frac{2}{d(d+2)}\left(A+\frac{1}{2}B\right)+\frac{4}{d(d+2)}A\right]+O(\Lambda^4)\\
    =&\frac{\pi^{d/2}}{\Gamma(d/2)}\left[\frac{2}{d}-\frac{1}{d(d+2)}B+\frac{2}{d(d+2)}A\right]+O(\Lambda^4).
\end{align}

\subsection{Normalizing with Respect to Surface Area}

In this subsection, we take a quick detour to show how Taylor series expansion will be used to divide by $\int_R S(x)dx$ in normalizing the relevant integrals.

We use the expansion
\begin{align}\label{eq:fancy expansion}
    \frac{a_0+a_1t+O(t^2)}{b_0+b_1t+O(t^2)}&=\left(\frac{a_0}{b_0}+\frac{a_1}{b_0}t+O(t^2)\right)\frac{1}{1+\frac{b_1}{b_0}t+O(t^2)}\\
    &=\left(\frac{a_0}{b_0}+\frac{a_1}{b_0}t+O(t^2)\right)\left(1-\frac{b_1}{b_0}t+O(t^2)\right)\\
    &=\frac{a_0}{b_0}+\frac{a_1}{b_0}t-\frac{a_0b_1}{b_0^2}t+O(t^2)
\end{align}
to estimate these ratios (matching $t$ with $\Lambda^2$, roughly)

\subsection{The Means}

By Lemma \ref{lemma:form for S(x)} and integrating in the radial direction,

\begin{align*}
    \int_R Q_i(x)S(x)dx&=\int_{S^{d-1}}\int_0^{r(\theta)}Q_i(x)\left[r^{d-1}+2r^{d+1}\sum_{j=1}^{D-d}\tilde{Q}_j(\theta)\right]\left(1+2\sum_{j=1}^{D-d}\tilde{Q}_j(r\theta)\right)drd\theta \\&+O(\Lambda^4)\\
    =&\int_{S^{d-1}}\int_0^{r(\theta)}Q_i(x)r^{d-1}drd\theta + O(\Lambda^3)\\
    =&\int_{S^{d-1}}\int_0^{r(\theta)}r^{d+1}Q_i(\theta)drd\theta+ O(\Lambda^3)\\
    =&\int_{S^{d-1}}\frac{1}{d+2}Q_i(\theta)r(\theta)^{d+2}d\theta+ O(\Lambda^3).
\end{align*}    
    
Thus, applying \eqref{eq:r(theta)^m} and \eqref{eq:sphere quad},
\begin{align*}
    \int_R Q_i(x)S(x)dx&=\int_{S^{d-1}}\frac{1}{d+2}Q_i(\theta)d\theta+ O(\Lambda^3)\\
    &=\frac{\pi^{d/2}}{(d+2)\Gamma(d/2+1)}\sum_{j=1}^d\lambda_{i,j}+O(\Lambda^3).
\end{align*}

Upon normalizing, we find
\begin{equation*}
    \bar{Q}_i=\frac{\int_R Q_i(x)S(x)dx}{\int_R S(x)dx}=\frac{1}{d+2}\sum_{j=1}^d\lambda_{i,j}+O(\Lambda^2).
\end{equation*}

Note that we only need this computation up to first-order accuracy since we will only need the above expression to multiply it by other terms depending on $\Lambda$ in the computation of $I_4$.

\subsection{Upper Trace}

Let $\theta^*\in S^{d-1}$ be arbitrary. We begin as with the computation of total surface area, converting to polar coordinates and integrating in $r$ first to get
\begin{align*}
    \int_R (x\cdot \theta^*)^2S(x)dx&=\int_{S^{d-1}}\int_0^{r(\theta)}(r\theta\cdot \theta^*)^2\left[r^{d-1}+2r^{d+1}\sum_{j=1}^{D-d}\tilde{Q}_j(\theta)\right]drd\theta+O(\Lambda^4)\\
    &=\int_{S^{d-1}}(\theta\cdot \theta^*)^2\int_0^{r(\theta)}r^{d+1}+2r^{d+3}\sum_{j=1}^{D-d}\tilde{Q}_j(\theta)drd\theta+O(\Lambda^4)\\
    &=\int_{S^{d-1}}(\theta\cdot \theta^*)^2\left[\frac{1}{d+2}r(\theta)^{d+2}+\frac{2}{d+4}r(\theta)^{d+4}\sum_{j=1}^{D-d}\tilde{Q}_j(\theta)\right]d\theta+O(\Lambda^4)
\end{align*}

By \eqref{eq:r(theta)^m}, may rewrite the above as
\begin{equation}
    \int_{S^{d-1}}(\theta\cdot \theta^*)^2\left[\frac{1}{d+2}(1-\frac{(d+2)q(\theta)}{2})+\frac{2}{d+4}(1-\frac{(d+4)q(\theta)}{2})\sum_{j=1}^{D-d}\tilde{Q}_j(\theta)\right]d\theta+O(\Lambda^4),\label{eq:I1 with theta^*}
\end{equation}
or
\begin{equation}
    \int_R (x\cdot \theta^*)^2S(x)dx=\int_{S^{d-1}}(\theta\cdot \theta^*)^2\left[\frac{1}{d+2}-\frac{q(\theta)}{2}+\frac{2}{d+4}\sum_{j=1}^{D-d}\tilde{Q}_j(\theta)\right]d\theta+O(\Lambda^4)\label{eq:I1 with theta^* II}
\end{equation}

By taking $\theta^*=e_i$ ($1\le i\le d$) and summing over $i$, we obtain the sum of the first $d$ eigenvalues. By the identity $\sum_{i=1}^d (\theta\cdot e_i)^2=1$ for $\theta\in S^{d-1}$, we have
\begin{align*}
    \sum_i (\Sigma_1)_{ii}=&\int_{S^{d-1}}\left[\frac{1}{d+2}-\frac{q(\theta)}{2}+\frac{2}{d+4}\sum_{j=1}^{D-d}\tilde{Q}_j(\theta)\right]d\theta+O(\Lambda^4)\\
    =&\frac{2\pi^{d/2}}{(d+2)\Gamma(d/2)}-\frac{\pi^{d/2}}{2\Gamma(d/2+2)}\sum_j\left(\frac{1}{2}(\sum_k\lambda_{k,j})^2+\sum_k\lambda_{k,j}^2\right)\\&+\frac{2\pi^{d/2}}{(d+4)\Gamma(d/2+1)}\sum_{j,k}\lambda_{k,j}^2+O(\Lambda^4)\\
    =&\frac{\pi^{d/2}}{\Gamma(d/2)}\left[\frac{2}{d+2}-\frac{1}{d(d+2)}B+\frac{2}{(d+2)(d+4)}A\right]+O(\Lambda^4).
\end{align*}

To complete our computation of the upper trace, we divide by the total surface area

\begin{align*}
    \sum_i \frac{(\Sigma_1)_{ii}}{\int_R S(x)dx}=&\frac{\frac{\pi^{d/2}}{\Gamma(d/2)}\left[\frac{2}{d+2}-\frac{1}{d(d+2)}B+\frac{2}{(d+2)(d+4)}A\right]+O(\Lambda^4)}{\frac{\pi^{d/2}}{\Gamma(d/2)}\left[\frac{2}{d}-\frac{1}{d(d+2)}B+\frac{2}{d(d+2)}A\right]+O(\Lambda^4)}\\
    =&\frac{d}{d+2}-\frac{1}{2(d+2)}B+\frac{d}{(d+2)(d+4)}A+\frac{d}{2(d+2)^2}B-\frac{d}{(d+2)^2}A\\&+O(\Lambda^4)\\
    =&\frac{d}{d+2}-\frac{2d}{(d+2)^2(d+4)}A-\frac{1}{(d+2)^2}B+O(\Lambda^4).
\end{align*}

\subsection{Bounds for Upper Eigenvalues}

Here, we will determine the lower and upper bounds for the eigenvalues of $\Sigma_1$ (found in \eqref{eq:upper bound for lambda_i} and \eqref{eq:lower bound for lambda_i} by approximating the expression in \eqref{eq:I1 with theta^* II} without particular choice of $\theta^*$.

Fix $1\le j\le D-d$ and perform a rotation so the eigenvectors of $\tilde{Q}_j$ become $e_1,...,e_d$ (keeping $\theta^*=(y_1,...,y_d)$ arbitrary). By ignoring any terms with odd degrees of $x_k$ and applying \eqref{eq:sphere 2-2} and \eqref{eq:sphere 4} 
\begin{align} 
    \int_{S^{d-1}}(\theta\cdot y)^2\tilde{Q}_j(\theta)d\theta&=\sum_{k=1}^dy_k^2\lambda_{k,j}^2\int_{S^{d-1}}x_k^4dx+\sum_{k\ne l}y_k^2\lambda_{l,j}^2\int_{S^{d-1}}x_k^2x_l^2dx\\
    &=\sum_{k}y_k^2\lambda_{k,j}^2\left(\frac{3}{2}\frac{\pi^{d/2}}{\Gamma(d/2+2)}\right)+\sum_{k\ne l}y_k^2\lambda_{l,j}^2\left(\frac{1}{2}\frac{\pi^{d/2}}{\Gamma(d/2+2)}\right)\\
    &\le \sum_k y^2_k \sum_l \lambda_{l,j}^2\left(\frac{3}{2}\frac{\pi^{d/2}}{\Gamma(d/2+2)}\right)\\
    &=\frac{3}{2}\frac{\pi^{d/2}}{\Gamma(d/2+2)}\sum_k \lambda_{k,j}^2 \label{eq:upper bound Q-tilde}
\end{align}

Similarly,
\begin{equation}\label{eq:lower bound Q-tilde}
    \int_{S^{d-1}}(\theta\cdot y)^2\tilde{Q}_j(\theta)d\theta\ge \frac{1}{2}\frac{\pi^{d/2}}{\Gamma(d/2+2)}\sum_k \lambda_{k,j}^2.
\end{equation}

Repeating the above strategy, we bound 
\begin{equation*}
    \int_{S^{d-1}}(\theta\cdot y)^2Q_j(\theta)^2d\theta=\int_{S^{d-1}}\left(\sum_{k=1}^dy_k^2x_k^2\right)\left(\sum_{l=1}^d\lambda_{l,j}x_l^2\right)\left(\sum_{m=1}^d\lambda_{m,j}x_m^2\right)d\theta.
\end{equation*}
Expanding the above and applying \eqref{eq:sphere 2-2-2}, \eqref{eq:sphere 4-2}, and \eqref{eq:sphere 6} gives
\begin{equation*}
    \frac{\pi^{d/2}}{\Gamma(d/2+3)}\left(\sum_{k}\frac{15y_k^2\lambda_{k,j}^2}{4}+\sum_{k\ne l\ne m\ne k}\frac{y_k^2\lambda_{j,l}\lambda_{m,j}}{4}+2\sum_{k\ne l}\frac{3y_k^2\lambda_{k,j}\lambda_{l,j}}{4}+\sum_{k\ne l}\frac{3y_k^2\lambda_{l,j}^2}{4}\right).
\end{equation*}
    
Thus,     
\begin{align}
    \int_{S^{d-1}}(\theta\cdot y)^2Q_j(\theta)^2d\theta&\le \frac{\pi^{d/2}}{\Gamma(d/2+3)}\left(\sum_{k}\frac{15y_k^2}{4}\sum_l\lambda_{l,j}^2+\sum_ky_k^2\frac{3}{2}\left((\sum_{l}\lambda_{l,j})^2+\sum_l\lambda_{l,j}^2\right)\right)\\
    &=\frac{\pi^{d/2}}{\Gamma(d/2+3)}\left(\frac{21}{4}\sum_l\lambda_{l,j}^2+\frac{3}{2}(\sum_{l}\lambda_{l,j})^2\right)\label{eq:upper bound Q_j}
\end{align}

Here, we simply acknowledge
\begin{equation}\label{eq:lower bound Q_j}
    \int_{S^{d-1}}(\theta\cdot y)^2Q_j(\theta)^2d\theta\ge0.
\end{equation}

Subbing \eqref{eq:upper bound Q-tilde} and \eqref{eq:lower bound Q_j} into \eqref{eq:I1 with theta^* II}, we find
\begin{align*}
    \int_R (x\cdot\theta^*)^2S(x) dx&\le\int_{S^{d-1}}(\theta\cdot \theta^*)^2\frac{1}{d+2}d\theta+\frac{2}{d+4}\frac{3}{2}\frac{\pi^{d/2}}{\Gamma(d/2+2)}\sum_{k,j} \lambda_{k,j}^2+O(\Lambda^4)\\
    &=\int_{S^{d-1}}x_1^2\frac{1}{d+2}d\theta+\frac{3}{d+4}\frac{\pi^{d/2}}{\Gamma(d/2+2)}A+O(\Lambda^4)\\
    &=\frac{1}{d+2}\frac{\pi^{d/2}}{\Gamma(d/2+1)}+\frac{3}{d+4}\frac{\pi^{d/2}}{\Gamma(d/2+2)}A+O(\Lambda^4)\\
    &=\frac{\pi^{d/2}}{\Gamma(d/2)}\left[\frac{2}{d(d+2)}+\frac{12}{d(d+2)(d+4)}A\right]+O(\Lambda^4)
\end{align*}

To normalize the upper bound we divide the above by $\int_R S(x)dx$ to get
\begin{align*}
    \frac{\int_R(x\cdot\theta^*)^2S(x)dx}{\int_RS(x)dx}&\le\frac{\frac{\pi^{d/2}}{\Gamma(d/2)}\left[\frac{2}{d(d+2)}+\frac{12}{d(d+2)(d+4)}A\right]+O(\Lambda^4)}{\frac{\pi^{d/2}}{\Gamma(d/2)}\left[\frac{2}{d}-\frac{1}{d(d+2)}B+\frac{2}{d(d+2)}A\right]+O(\Lambda^4)}\\
    &=\frac{1}{d+2}+\frac{6}{(d+2)(d+4)}A\\&-\frac{1}{d+2}\left[-\frac{1}{2(d+2)}B+\frac{1}{(d+2)}A\right]+O(\Lambda^4)\\
    &=\frac{1}{d+2}+\frac{5d+8}{(d+2)^2(d+4)}A+\frac{1}{2(d+2)^2}B+O(\Lambda^4)
\end{align*}

Subbing \eqref{eq:lower bound Q-tilde} and \eqref{eq:upper bound Q_j} into \eqref{eq:I1 with theta^* II}, we see
\begin{align*}
    \int_R (x\cdot\theta^*)^2S(x) dx&\ge\frac{1}{d+2}\frac{\pi^{d/2}}{\Gamma(d/2+1)}-\frac{\pi^{d/2}}{\Gamma(d/2+3)}\left(\frac{21}{4}\sum_j\sum_l\lambda_{l,j}^2+\frac{3}{2}\sum_j(\sum_{l}\lambda_{l,j})^2\right)\\
    &+\frac{1}{d+4}\frac{\pi^{d/2}}{\Gamma(d/2+2)}\sum_j\sum_k \lambda_{k,j}^2+O(\Lambda^4)\\
    &=\frac{1}{d+2}\frac{\pi^{d/2}}{\Gamma(d/2+1)}-\frac{\pi^{d/2}}{\Gamma(d/2+3)}\left(\frac{21}{4}A+\frac{3}{2}B\right)\\
    &+\frac{\pi^{d/2}}{\Gamma(d/2+3)}\frac{1}{2}A+O(\Lambda^4)\\
    &=\frac{\pi^{d/2}}{\Gamma(d/2)}\left[\frac{2}{d(d+2)}-\frac{8}{d(d+2)(d+4)}\left(\frac{19}{4}A+\frac{3}{2}B\right)\right]+O(\Lambda^4).
\end{align*}

To obtain the lower bound on the eigenvalues, we divide by total surface area:
\begin{align*}
    \frac{\int_R (x\cdot\theta^*)^2S(x) dx}{\int_R S(x) dx}\ge& \frac{\frac{2}{d(d+2)}-\frac{8}{d(d+2)(d+4)}\left(\frac{19}{4}A+\frac{3}{2}B\right)+O(\Lambda^4)}{\frac{2}{d}-\frac{1}{d(d+2)}B+\frac{2}{d(d+2)}A+O(\Lambda^4)}\\
    =&\frac{1}{d+2}-\frac{19}{(d+2)(d+4)}A-\frac{6}{(d+2)(d+4)}B+\frac{1}{2(d+2)^2)}B\\
    &-\frac{1}{(d+2)^2}A+O(\Lambda^4)\\
    =&\frac{1}{d+2}-\frac{20d+42}{(d+2)^2(d+4)}A-\frac{11d+20}{2(d+2)^2(d+4)}B+O(\Lambda^4)
\end{align*}

\subsection{Lower Eigenvalues}

We repeat the strategy of prior computations, taking advantage of the fact that $Q_j(x)^2-\bar{Q}_j^2=O(\Lambda^2)$ so the $O(\Lambda^2)$ terms in $S(x)$ turn into higher-order error. Similarly, we are able to take $r(\theta)\approx 1$ once we have integrated in $r$.

\begin{align*}
    \int_R (Q_j(x)^2-\bar{Q}_j^2)S(x)dx&=\int_R Q_j(x)^2-\bar{Q}_j^2dx + O(\Lambda^4)\\
    &=\int_{S^{d-1}}\int_0^{r(\theta)}r^{d-1}\left[r^2Q_j(\theta)^2-\bar{Q}_j^2\right]drd\theta+O(\Lambda^4)\\
    &=\int_{S^{d-1}}\frac{1}{d+2}r(\theta)^{d+2}Q_j(\theta)^2-\frac{1}{d}r(\theta)^d\bar{Q}_j^2 d\theta+O(\Lambda^4)\\
    &=\int_{S^{d-1}}\frac{1}{d+2}Q_j(\theta)^2-\frac{1}{d}\bar{Q}_j^2 d\theta+O(\Lambda^4)\\
\end{align*}

By Lemma \ref{lemma:sphere quad} and substituting our value of $\bar{Q}_i$,
\begin{align}
    \int_R (Q_j(x)^2-\bar{Q}_j^2)S(x)dx=&\left(\frac{1}{2(d+2)}(\sum_k\lambda_{k,j})^2+\frac{1}{d+2}\sum_k\lambda_{k,j}^2\right)\frac{\pi^{d/2}}{\Gamma(d/2+2)}\\&-\frac{1}{d}\frac{2\pi^{d/2}}{\Gamma(d/2)}\left(\bar{Q}_i\right)^2+O(\Lambda^3)\\
    =&\left(\frac{1}{2(d+2)}(\sum_k\lambda_{k,j})^2+\frac{1}{d+2}\sum_k\lambda_{k,j}^2\right)\frac{\pi^{d/2}}{\Gamma(d/2+2)}\\
    &-\frac{1}{d}\frac{2\pi^{d/2}}{\Gamma(d/2)}\left(\frac{1}{d+2}\sum_{j=1}^d\lambda_{k,j}+O(\Lambda^2)\right)^2+O(\Lambda^3)\label{eq:for2xfactor}\\
    =&\frac{\pi^{d/2}}{\Gamma(d/2)}\left[\frac{2}{d(d+2)^2}B_j+\frac{4}{d(d+2)^2}A_j-\frac{2}{d(d+2)^2}B_j\right]+O(\Lambda^3)\\
    =&\frac{\pi^{d/2}}{\Gamma(d/2)}\frac{4}{d(d+2)^2}A_j+O(\Lambda^3).
\end{align}

We normalize to get
\begin{align*}
    \frac{\int_R (Q_i(x)-\bar{Q}_i)^2S(x)dx}{\int_R S(x)dx}&=\frac{\frac{4}{d(d+2)^2}A_j+O(\Lambda^3)}{\frac{2}{d}-\frac{1}{d(d+2)}B+\frac{2}{d(d+2)}A+O(\Lambda^4)}\\
    &=\frac{2}{(d+2)^2}A_j+O(\Lambda^3).
\end{align*}

Note that the coordinate transformation on $\R^d\times \R^{D-d}$ by $(x,y)\mapsto(-x,y)$ applied to the manifold $\mathcal{M}$ replaces each $\lambda_{i,j}$ with $-\lambda_{i,j}$, yet leaves entries of $\Sigma$ unchanged. Thus, the above series in the $\lambda_{i,j}$ may only have even terms and
\begin{equation*}
    \frac{\int_R (Q_i(x)-\bar{Q}_i)^2S(x)dx}{\int_R S(x)dx}=\frac{2}{(d+2)^2}A_j+O(\Lambda^4).
\end{equation*}

Lastly, summing over $i$ gives \eqref{eq:sum lowest D-d}.

\section*{Acknowledgments}
We would like to thank Yariv Aizenbud for sharing code used to generate the airplane photos for our experiments, Sam Weissman for spotting a miscalculation in an earlier draft, and the reviewers for their helpful comments.

\bibliographystyle{siamplain}
\bibliography{BIB}

\end{document}